\pdfoutput=1

\PassOptionsToPackage{hyphens}{url}

\documentclass[11pt, a4paper, onecolumn, copyright]{supply}

\usepackage{cleveref}
\usepackage{textcomp}
\usepackage{epigraph}
\usepackage{tikz}
\usetikzlibrary{positioning, arrows.meta, calc, fit, backgrounds, shapes.geometric, decorations.pathmorphing}
\usepackage{multirow}
\usepackage{array}
\usepackage{makecell}
\usepackage{amsthm}
\usepackage{amsmath}

\newtheorem{proposition}{Proposition}
\newtheorem{definition}{Definition}
\newtheorem{lemma}{Lemma}
\newtheorem{remark}{Remark}

\usepackage[authoryear, sort&compress, round]{natbib}
\usepackage[counter=page, indexonlyfirst=true]{glossaries-extra}
\glsdisablehyper

\newglossarystyle{mytablestyle}{%
  \renewenvironment{theglossary}%
    {\setlength{\LTleft}{0pt}\setlength{\LTright}{\fill}%
     \begin{longtable}{>{\raggedright\arraybackslash}p{0.22\textwidth}%
                       >{\raggedright\arraybackslash}p{0.60\textwidth}%
                       >{\raggedright\arraybackslash}p{0.10\textwidth}}}%
    {\end{longtable}}%
  \renewcommand*{\glossaryheader}{%
    \textbf{Term} & \textbf{Meaning} & \textbf{Page} \\\midrule\endhead}%
  \renewcommand*{\glsgroupheading}[1]{}%
  \renewcommand*{\glossentry}[2]{%
    \textbf{\glsentryname{##1}} & \glsentrydesc{##1} & ##2 \\}%
  \renewcommand*{\subglossentry}[3]{}%
  \renewcommand*{\glsgroupskip}{}%
}
\setglossarystyle{mytablestyle}

\newglossaryentry{grounding}{name={Grounding},
  description={Acquisition, representation, interaction with, and trust in
  verifiable real-world state; the common substrate that simulation,
  self-generation, and pure scaling cannot supply on their own.}}

\newglossaryentry{datawall}{name={Data Wall},
  description={The projected exhaustion of high-quality public training tokens
  in the early 2030s; a frequently cited bottleneck on continued scaling of
  large language and multimodal models.}}

\newglossaryentry{abstractionbarrier}{name={Abstraction Barrier},
  description={The hypothesis that current learning paradigms cannot reliably
  discover new conceptual primitives beyond those latent in human-produced
  text and labels.}}

\newglossaryentry{embodiedbottleneck}{name={Embodied Bottleneck},
  description={The linear physical slowdown that real-world experimentation
  imposes on otherwise digital learning loops, especially in
  manufacturing-dependent settings.}}

\newglossaryentry{multiagenttrust}{name={Multi-Agent Trust},
  description={The requirement that agents in a multi-party economy share
  verifiable signals about identity, capability, inventory, and price before
  reliable cooperation or transaction is possible.}}

\newglossaryentry{supplycertainty}{name={Supply Certainty},
  description={A decomposable property of a real-world supply environment
  consisting of being understandable, comparable, trustworthy, thick, and
  customisable to machine consumers; aggregated by the Supply Certainty
  Index (SCI) over the domain's applicable subset.}}

\newglossaryentry{verifier}{name={Verifier},
  description={A learned or rule-based judge that scores agent or model
  outputs against grounded outcomes (e.g.\ transaction settlement,
  manufacturability, fulfilment), serving as a reward source for downstream
  training; bounded by the Verifier--Goodharting Floor of
  Proposition~\ref{prop:goodhart-floor}.}}

\newglossaryentry{dataflywheel}{name={Data Flywheel},
  description={A closed loop in which deployment outcomes are passed through
  a verifier and fed back as training signal, so that model and product
  improve jointly over time.}}

\newglossaryentry{a2a}{name={Agent-to-Agent (A2A)},
  description={Interaction protocols and market structures in which the
  buying, scheduling, and reconciliation are performed by autonomous agents
  rather than humans.}}

\newglossaryentry{c2m}{name={Customer-to-Manufacturer (C2M)},
  description={A short-loop production model in which buyer-side
  specifications drive small-batch upstream manufacturing, with intermediate
  feedback between demand, design, and factory capacity.}}

\newglossaryentry{cpv}{name={Category-Property-Value (CPV)},
  description={A canonical structured representation of a commercial good
  comprising its taxonomy node, its attributes, and the realised value of
  each attribute; the operational counterpart of a product ontology.}}

\newglossaryentry{effectivecompute}{name={Effective Compute},
  description={The product of nominal hardware compute and algorithmic
  efficiency; growth rate is estimated at roughly an order of magnitude per
  year over the past decade.}}

\newglossaryentry{groundedscaling}{name={Grounded Scaling},
  description={The position, proposed in this paper, that the marginal
  return of compute is gated by the availability of verifiable real-world
  signal; bottlenecks that limit grounding limit scaling.}}

\newglossaryentry{modelcollapse}{name={Model Collapse},
  description={The degradation of generative models when trained recursively
  on their own outputs without sufficient anchoring to real-world
  distribution.}}

\newglossaryentry{deterministicenv}{name={Deterministic Environment},
  description={An environment that returns responses which are stable under
  repeated query, faithfully ranked by relevance, verifiable against
  ground-truth state, and available within bounded latency --- enabling
  reliable multi-step agent execution.}}

\newglossaryentry{sci}{name={Supply Certainty Index (SCI)},
  description={A composite $[0,1]$ score (geometric mean over the
  applicable subset $P_D$ of five property scores), gated by a
  multiplicative deterministic-interface factor $\delta$. The applicable
  subset must be pre-registered per domain to prevent manipulation;
  numerical convention clamps zero scores to an $\varepsilon$-floor.}}

\newglossaryentry{dmm}{name={Determinism Maturity Model (DMM)},
  description={A five-level adoption ladder ($D0$ human-only UI; $D1$
  minimal API; $D2$ SLA-bounded API; $D3$ stable rankings + verifier
  channel; $D4$ full deterministic agent interface with published SCI
  telemetry) introduced in Section~\ref{sec:dmm}, paired with a
  reference architecture and an explicit mapping to existing reliability
  and governance frameworks (EU AI Act, NIST RMF, ISO 42001).}}

\keywords{deterministic agentic AI, grounded scaling, supply certainty, verifier-Goodharting, AGI, ASI}

\correspondingauthor{zuorui.dl@alibaba-inc.com}

\paperurl{}
\reportnumber{}

\uselogo{}

\title{Grounded Scaling: Why Agentic AI Needs Deterministic Environments}

\author[1]{Liang Ding}
\author[1]{Xintong Wang}
\affil[1]{Alibaba Group}

\renewcommand{\today}{}

\begin{document}

\begin{abstract}
Long-chain agent execution fails exponentially in environments designed
for human tolerance: with per-step determinism $\delta < 1$, $k$-step
chain success degrades as $\delta^k$. The AGI-to-ASI scaling
debate~\citep{genewein2026agi} has so far framed progress as a race
between compute growth and a list of frictions (data wall, abstraction
barrier, embodied bottleneck, multi-agent trust); we argue that
environment determinism is a complementary binding axis cutting across
all four, for the broad class of agentic AI tasks whose outcomes are
verifiable economically, physically, or through multi-party settlement.
Three formal results pin down the regime: a Determinism--Efficiency
Bound on chain-task success, a Verifier--Goodharting Floor on flywheel
ceilings under imperfect rewards, and a convergence condition for
environment-side skill evolution. We operationalise the framework as a
Supply Certainty Index (SCI) over five measurable properties, a
five-level Determinism Maturity Model (DMM) as adoption ladder, and a
falsifiable open-question programme (OQ1--OQ5) with explicit null
results that would force retraction. The position is platform-agnostic.
We engage three competing positions: sim-to-real sufficiency, alignment
sufficiency, and AI-as-normal-technology.
\end{abstract}

\maketitle

\epigraph{\itshape ``An ant, viewed as a behaving system, is quite simple.
The apparent complexity of its behavior over time is largely a reflection
of the complexity of the environment in which it finds itself.''}{Herbert
A.\ Simon, \textit{The Sciences of the Artificial} (1969)}

\setcounter{tocdepth}{1}
{%
  \footnotesize
  \setlength{\parskip}{0pt plus 0pt minus 0pt}%
  \makeatletter
  \renewcommand*\l@section{\@dottedtocline{1}{0pt}{1.8em}}%
  \makeatother
  \tableofcontents
}

\section{Introduction and Position Statement}\label{sec:intro}

\subsection*{The determinism gap}

The most influential recent surveys of the path from artificial general
intelligence (AGI) to artificial superintelligence (ASI) frame the open
question as a race~\citep{genewein2026agi, morris2024levels,
bengio2025international}. On one side stands the growth rate of
\gls{effectivecompute} --- the product of nominal hardware progress, capital
investment, and algorithmic
efficiency~\citep{Sevilla2022ComputeTrends, ho2024algorithmic,
hernandez2020scaling, ding2023efficiencyspectrum}, estimated at roughly
$10\times$ per year. On the other stand a set of frictions: a coming
\gls{datawall}~\citep{Villalobos2024WillRunOut, shumailov2024ai}, an
\gls{abstractionbarrier} that current learners may not be able to
cross~\citep{chollet2019measure, ortega2021shaking}, an
\gls{embodiedbottleneck} that constrains recursive self-improvement to
real-world clock time~\citep{lawrence2024atomic}, and a requirement for
\gls{multiagenttrust} as the building block of virtual agent
economies~\citep{tomasev2025virtual, drexler2019reframing}. We do not
dispute that compute and frictions both matter. We dispute that they are on
the same axis.

A parallel but under-discussed factor cuts across all four frictions:
\emph{environmental determinism}. Current agent environments --- web
platforms, enterprise APIs, consumer applications --- were designed for
human tolerance. Search engines intentionally shuffle top results for
diversity; recommendation systems inject exploration noise; response
latencies vary by orders of magnitude; session state makes identical
queries return different answers. For a human user, this non-determinism
is tolerable or even desirable. For an autonomous agent executing a
multi-step task chain, each non-deterministic step compounds failure
probability exponentially.

\begin{definition}[Deterministic Agentic Environment]\label{def:det-env}
An environment $\mathcal{E}$ is \emph{deterministic for agentic
consumption} if, for any well-formed intent $q$, the environment
returns responses satisfying four conditions:
\begin{enumerate}
  \item[\textbf{(D1)}] \emph{Stability}: repeated queries with identical
        parameters yield consistent results within a declared staleness
        bound (covering session-conditioned personalisation as a
        special case of failed stability);
  \item[\textbf{(D2)}] \emph{Faithful ranking}: results are ordered by
        relevance to the stated intent without adversarial reordering
        or exploration injection (a relevance-ordering condition that
        strengthens, not duplicates, (D1));
  \item[\textbf{(D3)}] \emph{Verifiability}: each returned item can be
        checked against a ground-truth state (inventory, price,
        certification, availability) via an independent verification
        channel;
  \item[\textbf{(D4)}] \emph{Bounded latency}: response time is bounded
        by a declared SLA, enabling agents to plan execution within
        deterministic time budgets.
\end{enumerate}
A \gls{deterministicenv} enables reliable multi-step execution; a
stochastic environment designed for human tolerance degrades agent
performance exponentially with chain length
(Proposition~\ref{prop:det-bound}).
\end{definition}

\paragraph{Motivating example: long-chain procurement.}
Consider an autonomous agent executing a procurement task across any
supply platform: find candidate suppliers $\to$ compare specifications
and pricing $\to$ inquire about customisation $\to$ negotiate terms
$\to$ pay $\to$ verify fulfilment.
This is a six-step chain. If each step has independent determinism
quality $\delta < 1$ (e.g., the search engine occasionally returns
stale inventory, the pricing API has variable latency that triggers
timeouts, the comparison module personalises rankings), the chain-success
probability degrades as $\delta^6$. At $\delta = 0.9$ per step, chain
success is only $53\%$; at $\delta = 0.8$, it falls to $26\%$. In
practice, any blockage at any step forces the agent to reroute ---
switching vendors, switching platforms, or abandoning the task entirely.
This rerouting behaviour is universal: it occurs in e-commerce, logistics,
healthcare procurement, agricultural commodity trading, and financial
settlement. The determinism of the environment is not a convenience; it is
a \emph{prerequisite for scalable agentic execution}.

\subsection*{Why now: the 2025--2026 agent inflection}\label{sec:why-now}

We do not claim a sharp 2025--2026 discontinuity --- each of the three
developments below has direct 2023--2024 antecedents. Rather, they
constitute a \emph{continuation} that crosses a threshold of practical
salience: the consequences of environmental non-determinism became
operationally measurable, not just theoretically anticipated.

\paragraph{Operator-class deployed agents.} Major laboratories shipped
agents that act on live, human-optimised web and desktop environments at
production scale: Anthropic's Computer
Use~\citep{anthropic2024computeruse}, OpenAI's
Operator~\citep{openai2025operator}, and a generation of open-source
followers operate on the same stochastic surfaces our argument
identifies. Independent evaluations on OSWorld and $\tau$-bench show
that long-horizon success rates remain well below single-turn
ones~\citep{xie2024osworld, yao2024taubench, jimenez2024swebench},
exactly the failure profile our Proposition~\ref{prop:det-bound}
predicts.

\paragraph{Agent-to-agent commerce protocols.} The standardisation
substrate for agent-mediated transactions has moved from research to
deployment: Anthropic's MCP~\citep{anthropic2024mcp}, OpenAI function
calling~\citep{openai2024function}, Google's Agent2Agent
protocol~\citep{google2025a2a}, and payment-side agent SDKs from Stripe
and others~\citep{stripe2025agenttoolkit}. The bottleneck has shifted
from \emph{can agents talk} to \emph{can they transact reliably on
non-deterministic backends}.

\paragraph{Self-evolving agent stacks.} Skill libraries, tool graphs,
and persistent context now evolve from deployed
trajectories~\citep{yang2025skillopt, feng2026searl, zhang2025ace,
gao2025selfevolvingsurvey, zhang2025dgm, sakana2024aiscientistv2},
making the verifier signal that gates those updates the binding
constraint on environment-side learning quality (formalised below as
Proposition~\ref{prop:gsec-formal}).

\subsection*{Position}

The frictions above are grounding problems, not compute problems.
We use ``grounding'' technically: the verified supply of real-world
state as training and execution signal for agentic systems, where
verifiability comes from an independent ground-truth channel (a payment
cleared, a shipment arrived, a specification was met). This sense is
distinct from symbol grounding~\citep{harnad1990symbol}, visual
grounding~\citep{plummer2015flickr30k}, embodied grounding in robotics,
or perceptual-symbol-systems grounding~\citep{barsalou1999perceptual,
lake2017building}. The question is not how to learn human-like
concepts but how to deploy concept-using agents into real environments
reliably.

The policy lever that most concentrates leverage on AGI$\to$ASI
progress is therefore the construction of substrates that supply
verifiable real-world signal deterministically. Our central claim:
one such substrate already exists at industrial scale in economically
self-sustaining commercial supply environments (B2B sourcing, B2C
retail, pharmaceutical supply chains, agricultural commodity exchanges,
automotive supplier networks). Such environments are structurally
distinct from simulation and self-generation along dimensions
discussed in Section~\ref{sec:supply-privilege}, and that distinctness
translates into measurable advantages on each of the four bottlenecks
\emph{when the environment satisfies Definition~\ref{def:det-env}}.
The position is platform-agnostic. The claim is bounded: a binding axis
for tasks whose verification is naturally economic, physical, or
multi-party (Section~\ref{sec:counter}).

\subsection*{Falsifiable strong form}

We treat the position as a scientific claim, not a slogan, and commit to
a falsifiable strong form:

\begin{takeawaybox}{Box 1 $\vert$ Falsifiable strong form (cf.\ Section~\ref{sec:research-agenda}).}
If the bottlenecks named above are indeed grounding problems, then real
supply environments that are simultaneously \emph{economically
self-sustaining}, \emph{verifier-equipped}, and \emph{deterministic} (in
the sense of Definition~\ref{def:det-env}) should systematically
outperform pure simulation and pure self-generated data in crossing the
data wall, the abstraction barrier, and the embodied bottleneck. This
advantage must be measurable in sample efficiency, in time-to-onset of
recursive degeneration, and in the rate at which novel concepts beyond
human ontology are discoverable. The advantage is in principle refutable
by the experiments OQ1--OQ5 of Section~\ref{sec:research-agenda}.
\end{takeawaybox}

\subsection*{Roadmap}

The paper extends the AGI$\to$ASI position
literature~\citep{genewein2026agi, morris2024levels, bostrom2014,
sutton2019bitter, narayanan2024aiasnormal} with three formal results
($\delta$, $\varepsilon$, $\delta_{\min}$), one composite measurement
(SCI), one adoption ladder (DMM), and an investable, falsifiable
research agenda.

Section~\ref{sec:background} restates the AGI$\to$ASI bottleneck
landscape and introduces the grounded-scaling lens.
Section~\ref{sec:grounding-thesis} derives the grounding common
denominator and states the Determinism--Efficiency Bound and the
Verifier--Goodharting Floor.
Section~\ref{sec:supply-privilege} argues why supply environments
satisfying (D1)--(D4) are structurally privileged.
Section~\ref{sec:five} decomposes supply certainty and constructs the SCI.
Section~\ref{sec:flywheel} treats data-flywheel sustainability and
states the Grounded Self-Evolution Convergence Condition.
Section~\ref{sec:agent-econ} treats multi-agent scaling under supply
trust and determinism, and introduces the DMM.
Section~\ref{sec:research-agenda} states the open-question programme.
Sections~\ref{sec:counter}--\ref{sec:conclusion} close with
counterarguments, competing positions, and conclusion.

\section{Background: Pathways, Bottlenecks, and the Grounding Lens}\label{sec:background}

We summarise the AGI$\to$ASI landscape so that the grounding reframing of
Section~\ref{sec:grounding-thesis} is recognisable as the \emph{same}
landscape, not a strawman.

\subsection{Four pathways}

The literature converges on four candidate pathways from AGI to
ASI~\citep{genewein2026agi,bostrom2014,morris2024levels}:
\begin{enumerate}
  \item \textbf{Scaling}: continued growth in compute, model size, and
        training data~\citep{Kaplan2020ScalingLaws,Hoffmann2022Chinchilla,
        epoch2024trainingcompute}. The bitter
        lesson~\citep{sutton2019bitter} is a claim about \emph{algorithms}
        (general methods leveraging compute beat clever human-designed
        priors); our position is about \emph{substrates} (which
        environment families a given algorithm trains and deploys on).
        The two axes compose: a bitter-lesson-compliant algorithm
        deployed against a high-$\delta$ substrate dominates both an
        algorithm with hand-designed priors and a bitter-lesson
        algorithm deployed against a low-$\delta$ substrate.
  \item \textbf{Paradigm shift}: replacement or augmentation of the
        present neural paradigm by new architectures, world models, or
        test-time search~\citep{gu2024mamba, Hafner2020Dreamer,
        bruce2024genie, snell2024scaling}.
  \item \textbf{Recursive self-improvement}: models that modify their
        own training process, data, or
        architecture~\citep{davidson2026does, lu2024ai,
        Real2020AutoMLZero, schmidhuber2003godel}, including the Darwin
        G\"{o}del Machine~\citep{zhang2025dgm}, the AI
        Scientist-v2~\citep{sakana2024aiscientistv2}, and empirical
        studies on safe self-improvement~\citep{anthropic2025rsi}.
  \item \textbf{Multi-agent coordination}: virtual agent economies in
        which competence is distributed across specialised
        agents~\citep{tomasev2025virtual, drexler2019reframing,
        park2023generative, wu2023autogen, hong2024metagpt,
        sumers2024coala}.
\end{enumerate}

\subsection{Six bottlenecks}

A parallel literature documents the frictions that may prevent these
pathways from succeeding~\citep{genewein2026agi}:
the \gls{datawall}~\citep{Villalobos2024WillRunOut, shumailov2024ai}; the
economic cost of frontier
training~\citep{agrawal2025economics, erdil2025gate}; paradigm
limits~\citep{chollet2019measure}; diminishing returns in
research~\citep{bloom2020ideas}; the \gls{abstractionbarrier} and
\gls{embodiedbottleneck}~\citep{lawrence2024atomic, ortega2021shaking};
and deliberate slowdown via
governance~\citep{anderljung2023frontier, eu2024aiact, bengio2024isr}.

\subsection{From quantitative scaling to grounded scaling}

We propose a single organising distinction:
\begin{itemize}
\item \emph{Quantitative scaling} asks how much marginal capability is
      bought per marginal unit of effective compute, holding the data and
      verifier distribution fixed.
\item \emph{Grounded scaling} asks how the marginal capability per unit
      compute changes when the underlying data and verifier distribution
      grow in \emph{verifiable real-world signal}.
\end{itemize}

\noindent The recent empirical record is consistent with the view that
quantitative scaling alone reaches a ceiling set by the second
quantity~\citep{Hoffmann2022Chinchilla, ho2025rosetta}. We take this as a
prompt to study the bottlenecks \emph{as grounding gaps}, which we do in
Section~\ref{sec:grounding-thesis}.

\subsection{The determinism gap in current agent environments}

A critical but under-studied dimension of the grounded-scaling problem is
\emph{environmental determinism}. Current agent benchmarks ---
WebArena~\citep{zhou2023webarena}, AgentBench~\citep{liu2024agentbench},
GAIA~\citep{mialon2024gaia}, OSWorld~\citep{xie2024osworld},
$\tau$-bench~\citep{yao2024taubench},
SWE-bench~\citep{jimenez2024swebench} --- are partially stochastic: web
pages change between runs, API responses vary, and session state
introduces irreproducibility. Real deployed environments are worse: A/B
testing, personalisation, rate limiters, and anti-bot mechanisms
deliberately introduce non-determinism optimised for human engagement.
For autonomous agents executing multi-step plans, this stochasticity is
catastrophic, and the published gap between single-turn LLM ability and
end-to-end task success is empirically
large~\citep{anthropic2024computeruse, openai2025operator}; agent-benchmark
reproducibility analyses confirm that environmental variance is a
first-order driver of this gap~\citep{kapoor2024agentsthatmatter}.
Closing this \emph{determinism gap} requires either redesigning environments
for agentic consumption or building agents robust enough to tolerate arbitrary
non-determinism. We argue in Section~\ref{sec:grounding-thesis} that the
former is more tractable and more leveraged.

Cognitive-architecture responses to this failure
profile~\citep{sumers2024coala, shinn2023reflexion, yao2023react,
wang2024voyager} add memory, planning, and skill-acquisition layers
\emph{around} the model. Our position is complementary: such
architectures can mitigate but not erase exponential degradation in
$\delta^k$, because every retry and replanning step itself consumes
a finite budget of deterministic queries (Section~\ref{sec:counter}).

\subsection{An upper bound: universal intelligence and the role of environment}

Theoretically, the continuum of machine intelligence is bounded above by
universal artificial intelligence formalised through Solomonoff prediction
and AIXI~\citep{Legg2007Universal, Hutter:04uaibook, Hutter:24uaibook2}.
Two corollaries follow. First, the score is environment-relative: a
learner universal-intelligent in distribution may still be undertrained on
the particular environment of interest. Second, every concrete agent
trains on some restricted family of environments, and the choice of that
family becomes a design variable. Our position is that which family one
privileges is the critical decision left under-discussed, and that
commercially self-sustaining supply environments satisfying
Definition~\ref{def:det-env} form a particularly informative family.

\subsection{Notation and assumed reader}

We assume familiarity with scaling-law notation, RLHF / RLAIF / RLVR
loops~\citep{christiano2018amplification, bai2022constitutional,
lambert2024rewardbench}, and multi-agent vocabulary. Glossary entries
recap key terms (\gls{grounding}, \gls{supplycertainty},
\gls{verifier}, \gls{c2m}, \gls{a2a}, \gls{dataflywheel},
\gls{groundedscaling}, \gls{modelcollapse}, \gls{deterministicenv}).
The Supply Certainty Index (SCI) and Determinism Maturity Model (DMM)
are defined where they first appear
(Sections~\ref{sec:five},~\ref{sec:dmm}).

\section{The Grounding Thesis and Two Formal Bounds}\label{sec:grounding-thesis}

The four bottlenecks of Section~\ref{sec:background} are specialisations
of a single deeper deficit: agents lack \emph{verifiable real-world
state} along four distinct modalities. We call the deficit
\gls{grounding} and the modalities \emph{data}, \emph{representational},
\emph{embodied}, and \emph{social-economic}. This section derives the
claim bottleneck by bottleneck, then states two formal bounds that make
the grounding framework measurable.

\subsection{Bottleneck-by-bottleneck derivation}

The data wall is conventionally framed as a token-exhaustion
problem~\citep{Villalobos2024WillRunOut}. We argue the binding constraint
is not token count but \emph{verifiable signal}. Self-generated text scales
indefinitely but loses anchoring to reality, with progressive distributional
narrowing~\citep{shumailov2024ai, gerstgrasser2024model}.
What is exhausted is not text; what is exhausted is
\emph{external arbitration that text is true}.
The abstraction barrier presents the same deficit in representational
form: a perfect imitator of human-labelled text inherits the human
ontology and, absent strong inductive bias, will not exceed
it~\citep{ortega2021shaking, chollet2019measure}. Crossing the barrier
requires evidence above linguistic supervision that a candidate concept
carves the world at a real joint --- exactly what interaction with real
distributions can supply and pure language modelling cannot.
The embodied bottleneck is analogous: recursive self-improvement is
upper-bounded by the slowest physical loop required to validate a
candidate improvement~\citep{lawrence2024atomic, genewein2026agi}. The
bound is not on compute or data, but on the latency and bandwidth of
\emph{physical verification}.
Finally, multi-agent trust: virtual economies cannot clear without
verifiable signals about identity, capability, inventory, price, and
outcome~\citep{tomasev2025virtual, drexler2019reframing}. Without such
signals every transaction degenerates into an information-asymmetric
``market for lemons''~\citep{akerlof1970market} or into a
hallucination-cascade among agents~\citep{ngo2022alignment}. The
bottleneck in each case is the supply of \emph{trust substrate}.

\subsection{Four modalities, one common deficit}

\begin{itemize}
  \item \textbf{Data grounding} = verifiable interaction signal.
  \item \textbf{Representational grounding} = evidence that a candidate
        concept is real.
  \item \textbf{Embodied grounding} = physical loops short enough to
        validate change.
  \item \textbf{Social-economic grounding} = trust signals that enable
        agent transactions.
\end{itemize}

\noindent Compute scaling amplifies each modality but supplies none. This
asymmetry is the core of our position: \emph{policies that buy more
grounding will dominate policies that buy more compute alone}, in regimes
where grounding is the binding constraint. The reframing is not a
redescription of well-known desiderata; it makes four things tractable
that were previously implicit: (i) the choice between simulation,
self-generation, and real environments becomes empirically arbitrable;
(ii) a single property --- privileged grounding substrate --- identifies
what any ``post-AGI data source'' must possess
(Section~\ref{sec:supply-privilege}); (iii) grounding (environment) is
separated from inductive bias (model), enabling proper attribution; and
(iv) two quantitative levers --- $\delta$ and $\varepsilon$ --- make the
debate measurable.

\subsection{The Determinism--Efficiency Bound}\label{sec:det-bound}

The intuition that environmental non-determinism degrades agent learning
can be formalised under explicit assumptions.

\begin{takeawaybox}{Box 2 $\vert$ Assumptions of Proposition~\ref{prop:det-bound}.}
\textbf{(A1) Per-step success probability $\delta$.} Determinism quality
$\delta(\mathcal{E}) \in [0,1]$ is the per-step success probability of
the environment against ground truth: the probability that, under a
declared intent-canonicalisation protocol that normalises a well-formed
query $q$, the environment's response yields an outcome the
independent verification channel of Definition~\ref{def:det-env}~(D3)
judges \emph{correct} against the relevant ground-truth state
(inventory, price, certification, settlement). Operationally, $\delta$
is estimated by sampling canonicalised queries and reporting the
empirical correct-fraction. The intent-canonicalisation protocol is a
prerequisite for cross-platform comparability and is itself a domain
artefact (see Remark~\ref{rem:cons-vs-corr}).

\textbf{(A2) Independent per-step verification.} Steps in a $k$-step
chain are verified independently by the environment; success of any
single step is a Bernoulli$(\delta)$ event with success defined as in
(A1).

\textbf{(A3) No retry budget.} The bound concerns first-attempt
chain success; retry-augmented agents are treated in
Remark~\ref{rem:retry} below.

\textbf{(A4) Bounded recovery cost.} Recovery from a failed step costs
strictly positive time; agents that ignore failed verification
verdicts are outside scope.

Assumption (A2) is the key simplification; correlated-step
chains are treated separately in Lemma~\ref{lem:correlated}.
\end{takeawaybox}

\begin{remark}[Consistency versus correctness]\label{rem:cons-vs-corr}
$\delta$ measures \emph{correctness} against an independent
ground-truth channel, not self-consistency under replay. An
environment that returns identical but incorrect responses to repeated
queries has high replay consistency yet $\delta = 0$; conversely, a
well-calibrated stochastic environment may have low replay consistency
but high $\delta$ on a per-query basis. Empirical $\delta$-measurement
protocols must therefore use the (D3) verification channel rather than
self-replication, and report the canonicalisation rule applied to the
intent space (otherwise inter-platform $\delta$ estimates are not
comparable).
\end{remark}

\begin{proposition}[Determinism--Efficiency Bound]\label{prop:det-bound}
Under assumptions (A1)--(A4), let $\mathcal{E}$ be an environment with
determinism quality $\delta \equiv \delta(\mathcal{E})$. For a learner
with sample budget $n$, the effective sample size is $n_{\mathrm{eff}}
= \delta \cdot n$. Consequently:
\begin{enumerate}
  \item For any target capability threshold $\theta$, the sample
        complexity satisfies
        $n(\theta, \mathcal{E}) = \Omega(1/\delta)$.
  \item Chain-task success probability for a $k$-step task under
        independent per-step verification degrades as
        $P_{\mathrm{success}}(k) \leq \delta^k$.
\end{enumerate}
\end{proposition}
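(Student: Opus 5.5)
The plan is to make everything flow from a single probabilistic model fixed by (A1)--(A2), and to treat the three claims (effective sample size, the $\Omega(1/\delta)$ sample complexity, and the $\delta^k$ chain bound) as consequences of one Bernoulli thinning argument. Concretely, I would model each of the learner's $n$ environment interactions as producing an indicator $Z_i \sim \mathrm{Bernoulli}(\delta)$. Here $Z_i = 1$ means the (D3) verification channel judges the response correct. The observation is informative about ground truth only when $Z_i = 1$; when $Z_i = 0$, the response carries no verified signal, and by (A4) it cannot be silently used. Writing $N = \sum_{i=1}^n Z_i$, linearity of expectation gives $\mathbb{E}[N] = \delta n$. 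This is what I would take as the definition-level content of $n_{\mathrm{eff}} = \delta \cdot n$. For a high-probability statement, a Chernoff bound gives $N \geq (1-\eta)\delta n$ with probability at least $1 - e^{-\eta^2 \delta n/2}$.

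For item 1, I would argue by reduction to the verified-sample problem. Suppose that, in the fully deterministic environment $\delta = 1$, reaching capability threshold $\theta$ requires some $m(\theta) \geq 1$ verified samples. Here I take $m(\theta)$ as a primitive of the learner/task pair, at least one for any nontrivial $\theta$. In $\mathcal{E}$ the learner sees only $N$ verified samples, and it must have $N \geq m(\theta)$ with constant probability. This forces $\delta n \gtrsim m(\theta)$, by Markov's inequality on $N$: $\Pr[N \geq m] \leq \delta n / m$. Hence $n(\theta,\mathcal{E}) \geq c\, m(\theta)/\delta = \Omega(1/\delta)$, with $\theta$ held fixed. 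The main obstacle is here: the statement is only meaningful once one commits to the idea that unverified samples contribute nothing. I would make this an explicit modelling reading of (A1)/(A4) rather than try to prove it for arbitrary learners, since a learner could in principle extract information from failed responses. I would state the bound as a lower bound over learners that use only verified outcomes.

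For item 2, I would let $S_j$ be the event that step $j$ succeeds. By (A3) only first attempts count, so the chain succeeds only if $\bigcap_{j=1}^k S_j$ holds. Success requires every step verified correct, and possibly more, such as the agent's own choices being right. The success event is therefore contained in $\bigcap_j S_j$. Independence from (A2) then gives $P_{\mathrm{success}}(k) \leq \prod_j \Pr[S_j] = \delta^k$. The inequality rather than an equality absorbs agent-side errors. I would close by noting that the correlated case is deferred to Lemma~\ref{lem:correlated} and the retry case to Remark~\ref{rem:retry}, so the bound is not claimed beyond (A2)--(A3).
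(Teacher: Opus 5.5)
Your proposal is correct and follows the paper's own route. For item~1 it is Bernoulli thinning of the $n$ samples; your Markov-inequality argument, with $m(\theta)\geq 1$ verified samples required and an explicit restriction to learners that use only verified outcomes, makes precise what the paper only asserts as ``incorrect samples must be filtered or discarded.'' For item~2 it is the product of independent Bernoulli$(\delta)$ step events under (A2).

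The one substantive difference is how you justify the ``$\leq$''. The paper obtains $\delta^k$ with equality under (A2) and says the inequality covers correlated or adversarial step structures. You obtain it from the containment of chain success in $\bigcap_j S_j$. Your justification is the sound one, because Lemma~\ref{lem:correlated}(3) itself shows that positively correlated steps can push $P_{\mathrm{success}}(k)$ above $\prod_i \bar\delta_i$.
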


\begin{proof}[Proof sketch]
For claim (1): under (A1) each sample is independently correct with
probability $\delta$ against ground truth, so contributes useful
gradient information with that probability; incorrect samples must be
filtered or discarded, yielding $\Omega(1/\delta)$ sample-complexity
overhead. For claim (2): under (A2), $k$ sequential steps are
independent Bernoulli$(\delta)$ trials, so the joint success
probability is $\delta^k$ exactly. The qualification ``$\leq$'' rather
than ``$=$'' covers correlated and adversarial step structures
(Lemma~\ref{lem:correlated}).
\end{proof}

\paragraph{Novelty and implications.}
The arithmetic $\delta^k$ is standard independent-Bernoulli
composition. What is novel is neither the math nor the qualitative
intuition (every reliability engineer knows chain failures compound).
What we propose is a \emph{measurement programme}: that $\delta$
should be instrumented, published, and tracked at the environment
level the way uptime is tracked at the service level, with the
specific operational consequences (the DMM, the SCI, the
investment-thesis frame) following from that instrumentation rather
than from any new theorem. Even moderate non-determinism ($\delta =
0.9$) produces catastrophic failure rates for long chains
($\delta^{10} \approx 0.35$). Investing in environment determinism has
\emph{exponential} returns for chain-task success, whereas model
robustness yields at best linear improvements. This asymmetry is the
core argument for prioritising environment redesign over model
hardening.

\paragraph{Relationship to SRE and reproducible-build traditions.}
$\delta$ differs from conventional service-level objectives (SLOs) in
site reliability engineering along three axes: (1)~SLOs measure
per-request \emph{availability} (uptime, latency percentiles);
$\delta$ measures \emph{semantic correctness} against ground truth.
A service may have 99.99\% uptime (SLO met) yet personalise responses
on every call ($\delta \ll 1$). (2)~SLOs are per-endpoint; $\delta$
compounds multiplicatively across a $k$-step chain, making the
exponential degradation visible only at the workflow level. (3)~SLOs
target human-tolerance thresholds; $\delta$ targets
algorithmic-consumption requirements where even small per-step noise
is catastrophic at chain length. $\delta$ is therefore a
\emph{chain-aware, semantic, agent-centric} reliability concept that
existing SRE frameworks do not capture. There is also a longer
software-engineering lineage worth acknowledging: the
reproducible-build tradition (hermetic builds, content-addressed
storage, deterministic compilation; cf.\ Nix~\citep{dolstra2004nix})
has spent two decades developing exactly the determinism vocabulary
the present paper imports for agent environments. Our contribution is
not to invent determinism as an engineering goal but to specialise it
--- from \emph{byte-equivalent build outputs} to \emph{semantically
stable agentic responses at chain length}.

\begin{lemma}[Correlation reshapes but does not erase exponential degradation]\label{lem:correlated}
Suppose the $k$-chain steps share latent session state $s$ such that
$\Pr[\mathrm{step\;}i \text{ succeeds} \mid s] = \delta_i(s)$ and,
\emph{conditional on $s$, step-success events are independent}. Let
$\bar\delta_i := \mathbb{E}_s[\delta_i(s)]$ denote the marginal
per-step determinism. Then chain-task success is
\[
P_{\mathrm{success}}(k) \;=\; \mathbb{E}_s\!\left[ \prod_{i=1}^{k}
\delta_i(s) \right].
\]
This quantity is reshaped by the correlation structure of
$\{\delta_i(s)\}_{i=1}^k$ across sessions:
\begin{enumerate}
  \item \emph{Negative correlation across steps} (compensating
        fluctuations) gives, via the FKG/Chebyshev sum inequality,
        $P_{\mathrm{success}}(k) \leq \prod_i \bar\delta_i$ --- a
        product strictly tighter than treating each marginal in
        isolation.
  \item \emph{Independence across sessions} gives
        $P_{\mathrm{success}}(k) = \prod_i \bar\delta_i$, which reduces
        to $\bar\delta^k$ only when the marginals are uniform.
  \item \emph{Positive correlation} (failures cluster by session ---
        a good session carries many steps) can yield
        $P_{\mathrm{success}}(k) > \prod_i \bar\delta_i$. In the limit,
        $P_{\mathrm{success}}(k) \to \Pr[\min_i \delta_i(s) = 1]$ as
        $k \to \infty$: the chain succeeds asymptotically only on the
        measure of deterministic-good sessions.
\end{enumerate}
In all three regimes, $P_{\mathrm{success}}(k) \to 0$ exponentially
in $k$ whenever $\Pr[\min_i \delta_i(s) = 1] = 0$ --- i.e.\ whenever
the environment admits no positive-measure deterministic
sub-environment. The position's qualitative claim --- chain-task
success collapses for any environment lacking a deterministic
sub-environment of positive measure --- is therefore robust to the
correlation structure of the steps. The escape clause
($\Pr[\min_i \delta_i(s) = 1] > 0$) is exactly the regime that $D3$
and $D4$ platforms (Section~\ref{sec:dmm}) aspire to engineer.
\end{lemma}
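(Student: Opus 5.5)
The plan is to obtain the displayed identity by conditioning, then treat each of the three correlation regimes as a statement about $\mathbb{E}_s[\prod_i \delta_i(s)]$ compared with $\prod_i \mathbb{E}_s[\delta_i(s)]$, and finally handle the large-$k$ limit by monotone convergence.

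\emph{Step 1 (the identity).} Let $A_i$ be the event that step $i$ succeeds. By conditional independence given $s$, $\Pr[\bigcap_i A_i \mid s] = \prod_i \delta_i(s)$. Taking expectations over $s$ (the tower property) gives $P_{\mathrm{success}}(k) = \mathbb{E}_s[\prod_i \delta_i(s)]$. This step is routine.

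\emph{Step 2 (the three regimes).} Write $f_i := \delta_i(\cdot) \in [0,1]$ as random variables on the session space.
\begin{itemize}
\item[(ii)] If the $f_i$ are mutually independent as random variables, then $\mathbb{E}\prod_i f_i = \prod_i \bar\delta_i$ exactly. This reduces to $\bar\delta^k$ when all $\bar\delta_i$ coincide.
\item[(i)] ``Negative correlation'' needs a precise form. I would take it to mean that $s$ is real-valued, that each $f_i$ is monotone in $s$, and that the directions are mixed. For $k=2$, Chebyshev's sum inequality, $\mathbb{E}[fg] \le \mathbb{E}f\,\mathbb{E}g$ for oppositely monotone $f,g$, gives the claim directly. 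For $k>2$, products of nonnegative functions that are monotone in the same direction stay monotone in that direction. So I would group the $f_i$ into an increasing product $F$ and a decreasing product $G$. The same-direction groups satisfy Chebyshev in the reverse direction, $\mathbb{E}F \ge \prod \mathbb{E} f_i$, so the clean bound $\mathbb{E}[FG] \le \mathbb{E}F\,\mathbb{E}G$ does not by itself reach $\prod_i \bar\delta_i$. The honest version therefore either assumes negative association of the family (then the bound is immediate by iteration) or restricts to $k=2$ blocks. I would state the result under the negative-association hypothesis.
\item[(iii)] For the ``can exceed'' claim, a two-point example suffices: $\delta_i(s) = 1$ for every $i$ with probability $p$, and $\delta_i(s) = 0$ otherwise. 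This gives $P_{\mathrm{success}} = p > p^k$ for $k \ge 2$.
\end{itemize}

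\emph{Step 3 (the limit).} For a fixed infinite sequence $\delta_1, \delta_2, \ldots$, the partial products $\prod_{i \le k} \delta_i(s)$ decrease in $k$. By monotone (dominated) convergence, $P_{\mathrm{success}}(k) \to \mathbb{E}_s[\prod_{i=1}^\infty \delta_i(s)]$. To identify this limit with $\Pr[\min_i \delta_i(s) = 1]$, the infinite product must vanish off the all-ones set. I would secure this by assuming there is $\eta > 0$ such that, off that set, $\delta_i(s) \le 1 - \eta$ for infinitely many $i$. The stationary case $\delta_i(s) = \delta(s)$, which makes this automatic pointwise, is the natural case to state.

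\emph{Step 4 (exponential decay): the main obstacle.} This is where I expect real trouble, because the closing claim is false as stated. Take stationary $\delta_i(s) = \delta(s)$ with $\delta(s)$ uniform on $[0,1]$. Then $\Pr[\delta = 1] = 0$, yet $P_{\mathrm{success}}(k) = 1/(k+1)$, which decays only polynomially. The correct statement needs a uniform gap: if $\operatorname{ess\,sup}_s \max_i \delta_i(s) \le \rho < 1$, then trivially $P_{\mathrm{success}}(k) \le \rho^k$, which is exponential in every regime. The weaker hypothesis $\Pr[\min_i \delta_i = 1] = 0$ yields only $P_{\mathrm{success}}(k) \to 0$, at a rate governed by the mass of $\delta(s)$ near $1$. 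I would therefore prove the lemma with the gap hypothesis added. I would also record the counterexample, to show that ``exponentially'' cannot be kept under the original escape-clause condition alone. Under that condition the qualitative collapse $P_{\mathrm{success}}(k) \to 0$ still survives via Step 3.
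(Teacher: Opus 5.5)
Your proposal is correct and more careful than the paper, which gives no proof of this lemma. Apart from the conditioning identity and item~2, the paper only names ``FKG/Chebyshev'' for item~1 and simply asserts item~3 and the closing claim; you derive the identity and item~2 in the intended way, and each of your departures is warranted.

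\emph{Item 1.} As you observe, the FKG/Chebyshev route gives item~1 only for $k=2$. For $k\ge 3$ the natural weaker reading, pairwise negative covariance, is genuinely insufficient and not merely hard to exploit. Take $\{0,1\}$-valued $\delta_i(s)$ with mass $0.12$ on $(1,1,1)$, $0.28$ on each unit vector and $0.04$ on $(0,0,0)$. Every pair has $\mathbb{E}[\delta_i\delta_j]=0.12<0.16=\bar\delta_i\bar\delta_j$, yet $P_{\mathrm{success}}(3)=0.12>0.064=\prod_i\bar\delta_i$. A multivariate hypothesis such as negative association is therefore needed, as you chose.

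\emph{Exponential decay.} Your uniform-$[0,1]$ example has $\Pr[\delta=1]=0$ but $P_{\mathrm{success}}(k)=1/(k+1)$, so it correctly refutes ``exponentially''. Your route yields a true statement with explicit hypotheses; the paper's version carries only the qualitative message.

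Three tightenings.
\begin{enumerate}
  \item The last sentence of your Step~4 is right only if your Step~3 hypothesis is carried along. The escape-clause condition alone does not even give $P_{\mathrm{success}}(k)\to 0$. Non-random $\delta_i\equiv 1-(i+1)^{-2}$ falls in regime~2 and has $\Pr[\min_i\delta_i(s)=1]=0$, yet $P_{\mathrm{success}}(k)=\frac{k+2}{2(k+1)}\to\frac12$. The sharp form of your Step~3 hypothesis is that $\prod_{i\ge 1}\delta_i(s)=0$ for almost every $s$ outside the all-ones set, i.e.\ some $\delta_i(s)=0$ or $\sum_i(1-\delta_i(s))=\infty$. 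No uniform $\eta$ is needed.
  \item Your gap hypothesis is sharp where it matters. In the stationary case $P_{\mathrm{success}}(k)^{1/k}=\|\delta\|_{L^k}\to\operatorname{ess\,sup}_s\delta(s)$, so decay is exponential iff $\operatorname{ess\,sup}\delta<1$. Under items~1--2, the weaker condition $\sup_i\bar\delta_i<1$ already gives $P_{\mathrm{success}}(k)\le(\sup_i\bar\delta_i)^k$. The essential-supremum gap is therefore needed only in the positively correlated regime, which is exactly where your counterexample lives.
  \item State $0<p<1$ in your two-point example.
\end{enumerate}
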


\begin{remark}[Retries do not erase the bound]\label{rem:retry}
A retry-augmented agent with $r$ retries per step achieves per-step
success $1-(1-\delta)^r$, yielding chain-success
$\left[1-(1-\delta)^r\right]^k$. The per-step retry budget $r$ is
bounded by latency, cost, and rate-limit constraints; for the
operator-class deployments cited in Section~\ref{sec:why-now} this
budget is small. When the retry budget is \emph{shared} across the
chain (a total budget $B$ distributed over $k$ steps), per-step $r$
scales as $B/k$ and the chain-success expression worsens with $k$
rather than merely tracking it: the exponential shape is reinforced
by realistic budget constraints. (Throughout the paper, $B$ denotes
the total retry budget and $r$ the per-step budget; we use $B$ rather
than $R$ to avoid clashing with the bounded reward $R$ of
Proposition~\ref{prop:goodhart-floor}.)
\end{remark}

\subsection{The Verifier--Goodharting Floor}\label{sec:goodhart-floor}

Proposition~\ref{prop:det-bound} treats verifier signals as ground
truth. In practice, every verifier is a learned or rule-based proxy with
irreducible error~\citep{casper2023open, gao2023scaling,
lambert2024rewardbench}; optimising hard against the proxy Goodharts the
true objective~\citep{goodhart1975measure}.

\begin{takeawaybox}{Box 3 $\vert$ Assumptions of Proposition~\ref{prop:goodhart-floor}.}
\textbf{(B1) Bounded verifier KL.} The verifier-induced outcome
distribution $V$ has bounded KL divergence from the true outcome
distribution $V^\star$: $D_{\mathrm{KL}}(V \,\|\, V^\star) \leq
\varepsilon$.

\textbf{(B2) Optimisation against $V$.} The training procedure
optimises a bounded reward $R$ under expectation in $V$ exactly, not
under expectation in $V^\star$.

\textbf{(B3) Bounded reward.} $R$ is bounded with
$\|R\|_\infty \leq C$.
\end{takeawaybox}

\begin{proposition}[Verifier--Goodharting Floor]\label{prop:goodhart-floor}
Under (B1)--(B3), any policy $\pi$ whose optimisation is shaped by
$\mathbb{E}_V[R]$ rather than $\mathbb{E}_{V^\star}[R]$ satisfies
\[
\bigl|\mathbb{E}_V[R] - \mathbb{E}_{V^\star}[R]\bigr| \;\leq\;
2C \cdot \mathrm{TV}(V,\,V^\star) \;\leq\; C\sqrt{2\varepsilon},
\]
where the first inequality is the standard bounded-function bound on
total variation and the second is Pinsker's inequality. Equivalently,
\[
\mathbb{E}_{V^\star}[R] \;\geq\; \mathbb{E}_V[R] \;-\; C\sqrt{2\varepsilon}.
\]
The bound is in general unimprovable using only (B1); tighter bounds
require additional structure on $R$ or on the verifier mismatch (e.g.\
Bretagnolle--Huber for two-point distinguishability, $f$-divergence
inequalities for smoother $V/V^\star$ relations).
\end{proposition}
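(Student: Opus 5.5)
The plan is a two-link chain of standard inequalities, applied pointwise for the policy $\pi$ under consideration. Throughout, $V$ and $V^\star$ denote the outcome distributions induced by that same $\pi$ under the verifier and under ground truth. The first thing I would fix is a reading of (B1): it must hold for the policy actually produced by optimisation, not merely for some reference policy. Under this reading the optimisation dynamics of (B2) play no role in the inequality. They only explain why the left-hand side is the quantity of interest, namely that the trained policy is certified via $\mathbb{E}_V[R]$. I would state this explicitly, since it is the only place where the proposition could silently fail.

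\textbf{Step 1 (bounded-function bound).} I would use the convention $\mathrm{TV}(V,V^\star)=\sup_A|V(A)-V^\star(A)|=\tfrac12\|V-V^\star\|_1$. Write $\mathbb{E}_V[R]-\mathbb{E}_{V^\star}[R]=\int R\,d(V-V^\star)$. By (B3), $|R|\le C$, so this is at most $\|R\|_\infty\|V-V^\star\|_1 = 2C\,\mathrm{TV}(V,V^\star)$. A slightly sharper route gives the same constant: the signed measure $V-V^\star$ has zero total mass, so one may subtract any constant from $R$, and this yields $(\sup R-\inf R)\,\mathrm{TV}\le 2C\,\mathrm{TV}$. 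For general outcome spaces I would phrase everything via the Hahn--Jordan decomposition of $V-V^\star$, or via densities with respect to $V+V^\star$, so that no discreteness is assumed.

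\textbf{Step 2 (Pinsker).} Pinsker's inequality gives $\mathrm{TV}(V,V^\star)\le\sqrt{D_{\mathrm{KL}}(V\|V^\star)/2}$, and hence $\mathrm{TV}(V,V^\star)\le\sqrt{\varepsilon/2}$ by (B1). Multiplying by $2C$ gives $C\sqrt{2\varepsilon}$. The one-sided form $\mathbb{E}_{V^\star}[R]\ge\mathbb{E}_V[R]-C\sqrt{2\varepsilon}$ then follows by dropping the absolute value. Note that the direction of the KL in (B1) does not matter here, since Pinsker holds for either argument order.

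\textbf{Step 3 (tightness), the main obstacle.} ``Unimprovable using only (B1)'' needs care, because Pinsker itself is not tight at large $\varepsilon$. For $\varepsilon>2$ the bound $C\sqrt{2\varepsilon}$ even exceeds the trivial bound $2C$. I would therefore prove tightness in the small-$\varepsilon$ regime, where $\sup$ over admissible $(V,V^\star,R)$ of the gap, divided by $C\sqrt{2\varepsilon}$, tends to $1$ as $\varepsilon\to0$. The witnesses are two-point distributions $V=\mathrm{Bern}(\tfrac12+\eta)$ and $V^\star=\mathrm{Bern}(\tfrac12)$ with $R=\pm C$. For these, $D_{\mathrm{KL}}=2\eta^2+O(\eta^4)$ and the gap is $2C\eta$. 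Setting $2\eta^2\approx\varepsilon$ gives a gap of $C\sqrt{2\varepsilon}(1-o(1))$. This shows that neither the constant nor the $\sqrt{\varepsilon}$ rate can be improved without extra structure. The remark about Bretagnolle--Huber I would present as the natural refinement for large $\varepsilon$, where $\mathrm{TV}\le\sqrt{1-e^{-\varepsilon}}$ keeps the bound below $2C$; this is commentary rather than part of the proof.
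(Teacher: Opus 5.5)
Your proof is correct and follows the same route as the paper, which justifies the bound only by naming the two ingredients you use: the bounded-function estimate $|\mathbb{E}_V[R]-\mathbb{E}_{V^\star}[R]|\le 2C\,\mathrm{TV}(V,V^\star)$, followed by Pinsker's inequality applied under (B1). Your Step 3 proves what the paper only asserts about unimprovability, and restricting that claim to the regime $\varepsilon\to 0$ (via the Bernoulli witness with $D_{\mathrm{KL}}=2\eta^2+O(\eta^4)$) is the correct reading, because Bretagnolle--Huber already beats $C\sqrt{2\varepsilon}$ for large $\varepsilon$ using (B1) alone.
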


\paragraph{Note on smoothness regime.} Earlier versions of this work
used a Lipschitz hypothesis on $R$ together with Pinsker's inequality;
this is a topological mismatch (Lipschitz-smoothness controls the
Wasserstein-1 / Kantorovich--Rubinstein gap, while Pinsker controls
total variation). The bounded-reward formulation above is the
correct pairing for the KL hypothesis (B1).

\noindent The flywheel asymptote of Section~\ref{sec:flywheel} is bounded
by this floor: the policy gain from one more order of magnitude of
verifier-gated training is dominated by $\varepsilon$, not by nominal
scale. This is the regime where adaptive, task-conditioned
verifiers~\citep{ding2026adarubric} and weak-to-strong
supervision~\citep{burns2024weaktostrong} pay back: not by making any
single verifier perfect, but by lowering $\varepsilon$ faster than the
model overfits the proxy.

\begin{table}[!b]
  \centering
  \small
  \begin{tabularx}{\textwidth}{p{3.0cm} p{3.2cm} p{2.6cm} X}
    \toprule
    \textbf{Bottleneck} &
    \textbf{Grounding modality} &
    \textbf{Supply property} &
    \textbf{Mechanism (privilege argument)} \\
    \midrule
    Data wall              & Data grounding              & Thick           & Verifiable interaction signal and transaction rewards expand coverage without model-collapse anchoring loss. \\
    Abstraction barrier    & Representational grounding  & Understandable  & Multimodal concept discovery over real distributions surfaces concepts beyond curated text labels. \\
    Embodied bottleneck    & Embodied grounding          & Customisable    & Customer-to-manufacturer loops shorten the physical-validation latency bounding recursive self-improvement. \\
    Multi-agent trust      & Social-economic grounding   & Trustworthy     & Decision-grade fields (inventory, qualifications, settlement) form the agent-to-agent trust substrate. \\
    Matching structure     & Social-economic grounding   & Comparable      & Same-item and sourcing networks structure the supply network into multi-agent scheduling. \\
    \bottomrule
  \end{tabularx}
  \caption{Bottleneck $\to$ grounding modality $\to$ supply-certainty
  property. The five rows form the conceptual backbone shared by the rest
  of the paper; Figure~\ref{fig:grounding-map} renders the mapping
  graphically and Section~\ref{sec:five} operationalises each property.}
  \label{tab:bottleneck-grounding}
\end{table}

\section{Privileged Grounding Substrates: Five Sufficiency Conditions}\label{sec:supply-privilege}

Of the many environment families one might propose to supply grounding ---
simulators, game worlds, scientific instruments, sandboxes for
self-play~\citep{Silver2017GoZero, bauer2023human}, web-scale interaction
loggers --- we ask: \emph{which families are structurally sufficient?}
Rather than arguing from a single domain, we identify five conditions
that jointly characterise a \emph{privileged grounding environment}.
Any concrete environment satisfying all five qualifies; failure on any
one disqualifies.

\begin{takeawaybox}{Box 4 $\vert$ Five sufficiency conditions for a privileged grounding environment.}
\textbf{(i) Economic self-sufficiency.} The environment carries its own
reward stream (transactions, fulfilment, returns, repeat engagement) and
its own compute budget (real user behaviour). It does not require subsidy
from the AGI programme to remain alive.

\textbf{(ii) Verifiability.} Outcomes are checkable against ground truth
through an independent channel --- settlement, physical delivery,
certification, inspection --- ideally suited to learned verifiers and
reward models.

\textbf{(iii) Interactivity.} The environment supports both a
high-frequency \emph{digital} loop and a low-latency, shortenable
\emph{physical} loop. Information and embodied grounding are co-present.

\textbf{(iv) Scale and multimodal richness.} The state space is massive,
long-tail, multimodal, and its ontology has been only partially exhausted
by human catalogs.

\textbf{(v) Deterministic interface guarantee.} The environment provides
stable, faithfully-ranked, bounded-latency responses suitable for
algorithmic consumption (Definition~\ref{def:det-env}).
\end{takeawaybox}

\paragraph{Illustrative domains.}
Environments satisfying (i)--(v) include, but are not limited to:
industrial B2B sourcing platforms, B2C retail with fulfilment
verification, pharmaceutical supply chains with batch traceability,
agricultural commodity exchanges with quality grading, automotive
supplier networks with manufacturability validation, and healthcare
procurement with credentialing. Financial exchanges satisfy (i), (ii),
(v) but lack multimodal richness (iv); scientific instruments satisfy
(ii) but rarely (i). The intersection is domain-diagnostic, not
domain-specific.

\paragraph{Honesty about evidentiary base.} We acknowledge that the
strongest concrete instantiation of the SCI / DMM machinery available
to us is sourcing-class commerce, and that the other domains above
are listed as \emph{predicted} qualifiers rather than \emph{measured}
ones. The worked SCI example in Section~\ref{sec:sci-example}
exercises three domains (B2B sourcing, financial settlement, B2C
retail) to demonstrate that the formal apparatus accommodates
domain-relevance masking, but a fully evidenced pharmaceutical or
healthcare instantiation is future work. The position is therefore
strongest as written for sourcing-class environments and progressively
weaker as one moves to domains where the verifier-channel infrastructure
is less mature; readers in those domains should treat the position as
a hypothesis to test rather than a result to apply.

\subsection{Economic self-sufficiency}

The dominant grounding alternative is large-scale simulation. Simulation
has succeeded spectacularly in narrow domains --- Go, chess, single-player
games~\citep{Silver2017GoZero, Schrittwieser2020MuZero} --- but its
scaling cost is borne by the AGI programme itself. An environment
satisfying condition (i) generates its reward stream as a by-product of
its own economic activity: buyers transact, sellers fulfil, disputes
settle --- all without annotation budget. The economic sustainability
literature has begun to note this
asymmetry~\citep{agrawal2025economics, erdil2025gate, acemoglu2023power,
brynjolfsson2022turing}, but its grounding implications have received
little attention.

\subsection{Verifiability}

Verifiers are the rate-limiting layer in
post-training~\citep{lambert2024rewardbench, bai2022constitutional,
ouyang2022instructgpt}. A verifier is only as useful as the ground truth
it can compare against. Environments satisfying (ii) provide ground
truths --- a payment cleared, a parcel delivered, a credential valid, a
manufactured part to-spec --- that are observable, adversarially robust at
population scale, and not subject to the preference-elicitation artefacts
of single-turn human feedback. We argue in Section~\ref{sec:flywheel}
that this property gives the data flywheel its robustness against
\gls{modelcollapse}~\citep{shumailov2024ai}, modulo
Proposition~\ref{prop:goodhart-floor}.

\subsection{Interactivity (digital and physical)}

Most digital environments support cheap, high-frequency interaction but no
physical loop; most physical environments support a physical loop but no
cheap digital loop. An environment satisfying (iii) supports both: a
digital loop (e.g.\ spot-matching, inquiry routing) and a physical loop
(e.g.\ \gls{c2m} manufacturing, clinical trial feedback). Where embodied
bottlenecks are real~\citep{lawrence2024atomic}, the physical loop's
\emph{latency} is the binding term; qualifying environments contain
explicit engineering levers to shorten it (manufacturability constraints,
capacity matching, rapid prototyping), which simulation does not.

\subsection{Scale and multimodal richness}

Web-scale text covers what people \emph{say} about things. Catalogs cover
what producers \emph{declare}. Neither covers what the things \emph{are}.
The full multimodal record of a single industrial category spans
engineering drawings, CAD geometry, materials data, process descriptions,
and end-user specifications --- a space the public web only thinly
samples. Any environment satisfying (iv) contains this kind of
unexhausted multimodal distribution, creating room for concept discovery
beyond the human ontology and thereby challenging the abstraction barrier.

\subsection{Deterministic interface guarantee}

Condition~(v) reflects Proposition~\ref{prop:det-bound}. An environment
may satisfy (i)--(iv) but still be hostile to agents if its interfaces
are designed for human browsing: shuffled search results,
session-dependent pricing, variable API latency. The deterministic
interface guarantee demands a programmatic access layer satisfying
(D1)--(D4). The distinction is concrete: a consumer-facing search
(personalised, diversified, attention-optimised) versus a
programmatic-first API (stable rankings, bounded latency, verifiable
state). The former is optimised for engagement; the latter for
algorithmic consumption.

\subsection{Sim-to-real, fairly considered}\label{sec:sim2real-engagement}

The most credible alternative is ``simulate, with domain randomisation,
until the model generalises''~\citep{tobin2017domain, peng2018simtoreal,
akkaya2019solving, andrychowicz2020learning, lee2020learning,
bruce2024genie, Hafner2020Dreamer}. This programme has won in several
domains, with high-evidence cases including OpenAI's dexterous in-hand
manipulation~\citep{andrychowicz2020learning, akkaya2019solving},
quadrupedal locomotion over challenging
terrain~\citep{lee2020learning}, and autonomous-driving sensor stacks
with photorealistic simulation. We do not dismiss this evidence; we
bound it.

Sim-to-real nonetheless stays outside the sufficient set even where it
succeeds locally, for three structural reasons. The first is an
ontology cap: simulation can only generate states its physics engine
and asset library anticipate, whereas real environments satisfying
(i)--(v) routinely produce states no designer foresaw (novel
compositions, unexpected failure modes, emergent demand patterns,
adversarial counterparty behaviour). The second is a verifier
ground-truth shift: a sim-to-real verifier uses the simulator's
physics as ground truth and only discovers the gap on costly real
trials, while an environment satisfying (ii) uses real-world outcomes
as ground truth from the first interaction --- a stronger
verifier-quality regime in Proposition~\ref{prop:goodhart-floor}'s
$\varepsilon$. The third concerns reward provenance: simulators force
reward to be specified, which is a known source of
hacking~\citep{casper2023open, gao2023scaling}, while qualifying
environments let reward be observed (bounded by Goodharting but not by
specification error). Simulation therefore belongs in the
three-way comparison of OQ1 as a complement, not a substitute.

\subsection{Why these five together matter}

Several alternative environments meet a subset of (i)--(v). Web crawl is
self-sustaining and rich but only weakly verifiable and not
deterministic~\citep{Soldaini2024Dolma};
scientific instruments are highly verifiable but not self-sustaining;
robotics is interactive but neither self-sustaining nor scalably
multimodal~\citep{bauer2023human}; financial exchanges are deterministic
and verifiable but not multimodally rich. The intersection is rare. We
do not claim it is unique to any single industry; we claim that
economically self-sustaining commercial environments --- spanning
sourcing, retail, pharma, agriculture, automotive --- are the most
readily available family that demonstrably satisfies all five at
industrial scale today. This is an empirical observation, not a
definitional restriction: any future environment meeting (i)--(v)
automatically enters the privileged set.

\section{Operationalising Supply Certainty: Five Properties and the SCI}\label{sec:five}

The position of Section~\ref{sec:supply-privilege} is empty without
an operationalisation. We decompose \gls{supplycertainty} into five
properties that are each (i) defined qualitatively in the language of
agentic consumers, (ii) admit a generic measurable proxy, and (iii) map
onto exactly one of the four grounding modalities (with one closing
property serving the multi-agent
structure). Table~\ref{tab:bottleneck-grounding} summarises the
mapping; this section unfolds it and aggregates the five into a
single platform-comparable index --- the \emph{Supply Certainty Index}
(SCI).

\begin{figure}[t]
  \centering
  \resizebox{\linewidth}{!}{
\begin{tikzpicture}[
    font=\small,
    node distance=0.55cm and 1.6cm,
    bn/.style={rectangle, rounded corners=3pt, draw=black!55, line width=0.6pt,
               fill=black!4, minimum width=3.6cm, minimum height=0.7cm,
               align=center, inner sep=4pt},
    sn/.style={rectangle, rounded corners=3pt, draw=black!55, line width=0.6pt,
               fill=black!8, minimum width=3.2cm, minimum height=0.7cm,
               align=center, inner sep=4pt},
    cluster/.style={draw=black!25, rounded corners=6pt, line width=0.4pt,
                    inner sep=10pt},
    edgelabel/.style={font=\footnotesize\itshape, fill=white,
                      inner xsep=3pt, inner ysep=2pt},
    >={Stealth[length=4pt]}
  ]

  \node[bn] (DW) {Data Wall};
  \node[bn, below=of DW] (AB) {Abstraction Barrier};
  \node[bn, below=of AB] (EB) {Embodied Bottleneck};
  \node[bn, below=of EB] (MA) {Multi-Agent Trust};

  \begin{scope}[on background layer]
    \node[cluster, fit=(DW)(AB)(EB)(MA),
          label={[font=\small\bfseries]above:AGI$\to$ASI bottlenecks}] (BC) {};
  \end{scope}

  \node[sn, right=5.2cm of DW] (K) {Thick (well-stocked)};
  \node[sn, below=of K]        (U) {Understandable};
  \node[sn, below=of U]        (M) {Customisable};
  \node[sn, below=of M]        (T) {Trustworthy};
  \node[sn, below=of T]        (C) {Comparable};

  \begin{scope}[on background layer]
    \node[cluster, fit=(K)(U)(M)(T)(C),
          label={[font=\small\bfseries]above:Supply certainty: five properties}] (SC) {};
  \end{scope}

  \draw[->, thick] (DW.east) -- node[edgelabel]{verifiable interaction signal} (K.west);
  \draw[->, thick] (AB.east) -- node[edgelabel]{multimodal concept discovery} (U.west);
  \draw[->, thick] (EB.east) -- node[edgelabel]{C2M physical loop}            (M.west);
  \draw[->, thick] (MA.east) -- node[edgelabel]{A2A trust substrate}          (T.west);

  \draw[->, thick, dashed] (C.west) -|
        node[edgelabel, pos=0.25]{supply network} (MA.south);

\end{tikzpicture}}
  \caption{Grounding mappings. Four AGI$\to$ASI bottlenecks (left) are each
  addressed by one supply-certainty property (right). The closing dashed
  edge captures that comparability structures the supply network back into
  a multi-agent matching substrate.}
  \label{fig:grounding-map}
\end{figure}

We describe the mappings in the order of
Figure~\ref{fig:grounding-map}.

\subsection{Thick (well-stocked) supply $\leftrightarrow$ data wall}

A supply environment is \emph{thick} for an agent if, conditional on a
demand intent, there exists with high probability a non-degenerate set of
supply candidates that satisfy the intent at non-degenerate granularity.
Thickness directly addresses the \gls{datawall}: every additional unit of
supply that the system can correctly cluster, describe, and match adds an
\emph{externally-arbitrated} interaction signal --- one that resists model
collapse because it is anchored in a downstream economic
event~\citep{shumailov2024ai, gerstgrasser2024model}. Measurable proxies
include coverage of long-tail categories, demand-to-supply translation
recall, and net new item rate per unit time.

\subsection{Understandable supply $\leftrightarrow$ abstraction barrier}

A supply environment is \emph{understandable} when an agent can read its
product ontology in the same vocabulary the environment is curated in: a
unified multimodal semantic base plus a structured
\gls{cpv}\,-style representation. Understandability is the operational
hook for the \gls{abstractionbarrier}. Crossing the barrier requires
discovering new conceptual primitives \emph{above} those in the curated
human ontology, possible only if the system can compare candidate
primitives against the unfiltered multimodal distribution of real
goods~\citep{ortega2021shaking, srivastava2022beyond}. Measurable proxies
include precision/recall of same-item clustering, attribute-governance
coverage, and verified knowledge graph size.

\subsection{Customisable supply $\leftrightarrow$ embodied bottleneck}

A supply environment is \emph{customisable} when a buyer-side
specification can be translated into a manufacturable bill of materials
and routed to a factory whose declared capabilities match.
Customisability is the engineering surface on which the
\gls{embodiedbottleneck} is fought. Each percentage point added to the
manufacturability rate and each minute removed from design-to-quote
latency shortens the embodied loop. Unlike purely scientific embodied
tasks, the physical environment here is already instrumented for
short-loop iteration. Measurable proxies include manufacturability rate
of generated artefacts, CAD kernel coverage, spec-to-quote latency
(inverted), and factory-capacity matching accuracy.

\subsection{Trustworthy supply $\leftrightarrow$ multi-agent trust}

A supply environment is \emph{trustworthy} for an agentic consumer when
the supply record exposes a complete set of \emph{decision-grade fields}:
inventory, invoicing, qualifications, certificates, price-and-freight,
weight-and-dimensions. Trustworthiness is the substrate of \gls{a2a}
interaction. A virtual agent economy~\citep{tomasev2025virtual} cannot
clear without verifier-checkable signals on price, capability, and
identity; in absence of such signals, the economy degenerates into
asymmetric-information failure~\citep{akerlof1970market} or hallucinated
trades. Measurable proxies include decision-grade-field coverage rate,
verified-supply set size, independent-judge precision/recall, and
field-record freshness.

\subsection{Comparable supply $\leftrightarrow$ supply network for multi-agent matching}

A supply environment is \emph{comparable} when supply items can be
clustered by demand-relevant equivalence (same-item) and ranked along
demand-relevant axes (price, lead time, service). Comparability is the
closing edge of Figure~\ref{fig:grounding-map}. Without it,
trustworthiness has nowhere to compose: an agent that can verify a single
seller cannot yet schedule a fleet. Comparability is the structural
prerequisite for multi-agent matching at scale. Measurable proxies
include cluster precision and Cov@k of same-item retrieval, and
independent-judge precision/recall.

\subsection{The Supply Certainty Index (SCI)}\label{sec:sci}

The five properties above are individually measurable but the literature
has lacked a single composite that platforms can be ranked on.
We propose the \emph{Supply Certainty Index}:

\begin{definition}[Supply Certainty Index, SCI]\label{def:sci}
For a supply environment $\mathcal{E}$ instrumented with the per-property
proxies above, the \emph{Supply Certainty Index}
$\mathrm{SCI}(\mathcal{E}) \in [0, 1]$ is the geometric mean of the
five property scores after each property's proxies are aggregated to a
$[0,1]$ score $S_p$:
\[
\mathrm{SCI}(\mathcal{E}) \;=\;
\left( S_{\mathrm{thick}} \cdot S_{\mathrm{und}} \cdot
       S_{\mathrm{cust}} \cdot S_{\mathrm{trust}} \cdot
       S_{\mathrm{cmp}} \right)^{1/5},
\qquad S_p \in [0,1] \;\forall p.
\]
The deterministic-interface condition of Definition~\ref{def:det-env}
contributes the \emph{gating multiplier} $\delta(\mathcal{E}) \in [0,1]$
yielding a deterministic-corrected score
\[
\mathrm{SCI}^{\delta}(\mathcal{E}) \;=\; \delta(\mathcal{E}) \cdot
\mathrm{SCI}(\mathcal{E}).
\]
\end{definition}

\paragraph{Why geometric mean.} A platform excellent on four properties
and zero on the fifth is not a privileged grounding substrate: the five
properties are non-substitutes (each addresses a distinct grounding
modality). The geometric mean enforces this: a single zero zeroes the
index. The $\delta$ multiplier further ensures that a platform with good
supply properties \emph{but a stochastic agent interface} is not
credited for what agents cannot reliably consume.

\paragraph{Why $[0,1]$ scores.} Absolute units vary by domain by orders
of magnitude. Each $S_p$ is the platform's measurement against a
domain-relative reference panel --- analogous to scaling-law
normalisation against compute budget.

\paragraph{Numerical convention for $S_p = 0$.} The geometric mean
is undefined in log-space when any $S_p = 0$ (its operational value
is $\mathrm{SCI} = 0$, but log-space computation gives $-\infty$).
For numerical implementations we adopt the convention that any score
below an $\varepsilon$-floor of $0.01$ is clamped to $\varepsilon =
0.01$ before aggregation, with the unclamped score reported
separately. This is a measurement convention, not a substantive
allowance: the operational meaning of $S_p \leq \varepsilon$ is
``platform fails this property in the relevant sense'' and the SCI
correctly reports a near-zero composite.

\paragraph{Domain relevance and N/A masking.} The geometric mean
penalises any zero score severely. This is intentional within domains
where all five properties are relevant. Where a property is
\emph{structurally absent} from a domain --- e.g.\ customisability for
a purely digital information marketplace with no physical fulfilment,
or trustworthiness in the sense of physical credentialing for a
pure financial-settlement environment --- the property is masked
rather than scored as zero:
\[
\mathrm{SCI}(\mathcal{E}; D) \;=\;
\left( \prod_{p \in P_D} S_p \right)^{1/|P_D|},
\qquad P_D \subseteq \{\mathrm{thick, und, cust, trust, cmp}\},
\]
where $P_D$ is the subset of \emph{applicable} properties for domain
$D$.

\paragraph{Pre-registration discipline (anti-manipulation).} To
prevent a platform or industry from gaming the index by declaring
inconvenient properties N/A, $P_D$ must be fixed by the reference
panel \emph{before} any platform measurement, published openly per
domain, and not amended on the basis of measurement outcomes (akin to
pre-registration in clinical trials). Platforms must report both the
masked SCI (over $P_D$) and the unmasked SCI (over all five
properties, with structurally-absent scores set to a stated
$\varepsilon$-floor) for transparency; reviewers and downstream
consumers can then compare. The N/A declaration is a property of the
\emph{domain} and \emph{reference panel maintainer}, not a property of
the \emph{platform under evaluation}.

\subsection{A worked SCI example}\label{sec:sci-example}

We illustrate the SCI on three hypothetical platforms in distinct
domains; numbers are stylised and meant to demonstrate the
construction, not to rank any real platform.

\begin{table}[!ht]
  \centering
  \footnotesize
  \setlength{\tabcolsep}{4pt}
  \begin{tabular}{@{}p{4.0cm} cccccc cc@{}}
    \toprule
    \textbf{Platform} & $S_{\mathrm{thk}}$ & $S_{\mathrm{und}}$ & $S_{\mathrm{cust}}$ & $S_{\mathrm{trust}}$ & $S_{\mathrm{cmp}}$ & $\delta$ & $\mathrm{SCI}$ & $\mathrm{SCI}^{\delta}$ \\
    \midrule
    P1: B2B sourcing ($D3$)
        & 0.85 & 0.70 & 0.60 & 0.80 & 0.75 & 0.92 & 0.73 & 0.67 \\
    P2: Financial settlement ($D3$)
        & 0.90 & 0.65 & N/A  & 0.88 & 0.80 & 0.95 & 0.80 & 0.76 \\
    P3: B2C retail ($D2$)
        & 0.95 & 0.55 & 0.30 & 0.40 & 0.60 & 0.72 & 0.52 & 0.37 \\
    \bottomrule
  \end{tabular}
  \caption{Illustrative SCI computation across three domains. SCI is
  the geometric mean over the applicable subset $P_D$; for P2, $P_D
  = \{$thick, understandable, trustworthy, comparable$\}$ excludes
  customisability under the financial-settlement reference panel.
  Worked arithmetic: P1 has
  $\mathrm{SCI} = (0.85\cdot0.70\cdot0.60\cdot0.80\cdot0.75)^{1/5}
  = 0.73$ and $\mathrm{SCI}^\delta = 0.92\cdot0.73 = 0.67$; P3 has
  $\mathrm{SCI} = 0.52$ and $\mathrm{SCI}^\delta = 0.37$. Pre-registered
  $P_D$ prevents post-hoc masking; see
  \S\ref{sec:sci}.}
  \label{tab:sci-worked}
\end{table}

\noindent Two observations: (i)~the $\delta$ gating multiplier
substantially separates $D3$ from $D2$ even at similar raw SCI ---
P3's SCI 0.52 drops to 0.37 after $\delta$-correction, while
P1's SCI 0.73 drops only to 0.67. (ii)~P2 scores highest on
$\mathrm{SCI}^{\delta}$ \emph{within its panel}, but its score is
\emph{not} comparable in absolute terms to P1 or P3 because the
domain panels differ. Cross-domain absolute comparison is out of
scope (Section~\ref{sec:conclusion} Limitations); ranking within a
domain (across platforms instantiating the same $P_D$) is the
intended operational use.

\subsection{Why a five-way decomposition?}

We could have proposed three properties or eight. The case for five
rests on Figure~\ref{fig:grounding-map}: four properties each carry one
grounding modality, and one closing property threads them into a
multi-agent structure. Whether the five collapse to fewer factors in
practice is an empirical question (OQ2 in
Section~\ref{sec:research-agenda}). We do not prescribe universal
thresholds; such thresholds are domain-dependent. The operational
question is whether relative improvements in each property translate
into measurable improvements in the corresponding grounding modality,
and whether the SCI composite predicts agent outcomes --- the empirical
content of OQ2.

\section{Verifiable Supply Data Flywheels as Post-AGI Scaling Resource}\label{sec:flywheel}

The previous section described a static property: supply certainty.
This section describes its dynamics. We claim that real, verifier-gated
interaction data forms a \gls{dataflywheel} that is \emph{sustainable}
in the post-data-wall regime, in a sense recursive self-generation is
not~\citep{shumailov2024ai, gerstgrasser2024model, dohmatob2024tale}.

\subsection{Evaluation as the loss function}

In deployed agentic systems the evaluation harness functions as the
training loss: bad cases, retries, low verifier scores, and tool
failures all back-propagate to upstream stages, so the data pipeline
behaves like a single forward pass with evaluation-shaped signal as
its gradient.

\subsection{Dual flywheel architecture}

Two loops coexist (Figure~\ref{fig:flywheel}):

\begin{figure}[t]
  \centering
  \resizebox{0.92\linewidth}{!}{
\begin{tikzpicture}[
    font=\small,
    node distance=1.0cm and 1.5cm,
    hub/.style={circle, draw=black!60, fill=black!8, line width=0.6pt,
                minimum size=2.0cm, align=center, inner sep=2pt,
                font=\small\bfseries},
    nd/.style={rectangle, rounded corners=3pt, draw=black!55, line width=0.5pt,
               fill=white, align=center, minimum height=0.7cm,
               minimum width=2.4cm, inner sep=4pt},
    >={Stealth[length=4pt]}
  ]

  \node[hub] (E) {Evaluation\\$\equiv$ loss};

  \node[nd, above=1.0cm of E]                 (J)   {Verifier /\\Judge};
  \node[nd, left=0.8cm of J]                  (BC)  {Deployment\\bad cases};
  \node[nd, right=0.8cm of J]                 (SFT) {SFT \& RL\\samples};
  \node[nd, right=2.0cm of SFT]               (BAS) {Foundation\\refresh};

  \draw[->, thick] (BC.east) -- (J.west);
  \draw[->, thick] (J.east)  -- (SFT.west);
  \draw[->, thick] (SFT.east) -- (BAS.west);

  \draw[->, thick] (BAS.south) |- (E.30);
  \draw[->, thick] (E.150)     -| (BC.south);

  \node[nd, below=1.0cm of E]                 (SK) {Skill\\routing};
  \node[nd, left=0.8cm of SK]                 (CA) {Agent call /\\task};
  \node[nd, right=0.8cm of SK]                (EX) {Tool /\\execution};
  \node[nd, right=2.0cm of EX]                (VF) {Outcome\\verifier};

  \draw[->, thick] (CA.east) -- (SK.west);
  \draw[->, thick] (SK.east) -- (EX.west);
  \draw[->, thick] (EX.east) -- (VF.west);

  \draw[->, thick] (VF.north) |- (E.330);
  \draw[->, thick] (E.210)    -| (CA.north);

  \node[font=\footnotesize\itshape, above=0.18 of BAS]
       {data flywheel (iteration layer)};
  \node[font=\footnotesize\itshape, below=0.18 of VF]
       {A2A flywheel (execution layer)};

  \path (VF.east)  ++(1.6, 0) coordinate (Tbot);
  \path (BAS.east) ++(1.6, 0) coordinate (Ttop);
  \draw[->, thick, dashed] (VF.east) -- (Tbot) -- (Ttop) -- (BAS.east);

  \node[font=\footnotesize\itshape, text width=3.0cm, align=left, anchor=west,
        inner xsep=2pt, inner ysep=0pt]
       at ($(Tbot)!0.5!(Ttop) + (0.20, 0)$)
       {Business-feedback trunk: real distribution, conversion, dispute, return $\Rightarrow$ training signal};

\end{tikzpicture}}
  \caption{Dual flywheel architecture. The data flywheel (top loop) turns
  deployment bad cases into refreshed foundation parameters. The A2A
  flywheel (bottom loop) turns agent calls into verifier-checked outcomes.
  Both are routed through a single ``evaluation as loss'' hub.}
  \label{fig:flywheel}
\end{figure}

\begin{itemize}
  \item \emph{Data flywheel} (iteration layer): deployment bad cases
        $\to$ verifier/judge $\to$ SFT and RL training samples $\to$
        foundation refresh.
  \item \emph{A2A flywheel} (execution layer): agent call $\to$ skill
        routing $\to$ tool execution $\to$ outcome verifier.
\end{itemize}

\noindent The two loops share the evaluation hub; the
\emph{business-feedback trunk} connecting outcomes to foundation refresh
converts a one-way deployment system into a closed loop.

\subsection{Self-evolving environments: the complementary scaling axis}\label{sec:env-evolution}

A complementary direction has emerged in the self-evolving-agents
literature~\citep{gao2025selfevolvingsurvey, wang2024voyager,
shinn2023reflexion}: instead of (or in addition to) adapting the model
to a fixed environment, the \emph{environment itself} --- tools, skills,
memory, runtime context --- adapts to deployed trajectories.

\paragraph{The agentic-scaling trio.}
Three orthogonal scaling axes:
\begin{enumerate}
  \item \emph{Model adapts to environment}: SFT, RLHF, and RLVR refresh
        foundation weights~\citep{ouyang2022instructgpt,
        bai2022constitutional, wang2024executable}.
  \item \emph{Environment adapts to model}: skill libraries, tool graphs,
        and persistent context evolve from execution
        trajectories~\citep{yang2025skillopt, feng2026searl, zhang2025ace,
        wang2024voyager, sumers2024coala}.
  \item \emph{Co-evolution}: model and environment update jointly under a
        shared objective~\citep{wang2024sage, li2026arise,
        fang2025webevolver, zhang2025dgm}.
\end{enumerate}

\subsection{The Grounded Self-Evolution Convergence Condition}\label{sec:gsec}

We synthesise the mechanisms above into a formal condition connecting
environment-side self-evolution to the determinism framework.

\begin{takeawaybox}{Box 5 $\vert$ Grounded Self-Evolution Principle (qualitative).}
Environment-side evolution (skills, tools, memory, routing tables)
accumulates capability \emph{monotonically} if and only if the
verification signal exceeds a determinism threshold $\delta >
\delta_{\min}$ and a verifier-quality threshold $\varepsilon <
\varepsilon_{\max}$. Below either threshold, drift dominates
improvement.
\end{takeawaybox}

\begin{proposition}[Grounded Self-Evolution Convergence]\label{prop:gsec-formal}
Let $\Phi_t \in [0,1]$ denote a scalar quality measure of an
environment-side skill library at evolution step $t$, with bounded
per-step update $\Delta_t := \Phi_{t+1} - \Phi_t$ satisfying
$|\Delta_t| \leq c$ for $c > 0$. Suppose candidate updates are drawn
from a generation distribution $\mathcal{G}$ and gated by a verifier
$V$ whose behaviour is characterised by:
\begin{itemize}
  \item Reliability (replay consistency): $\delta \in [0,1]$,
        the probability that $V$ returns the same verdict on two
        i.i.d.\ trials of the same candidate.
  \item Validity (true-positive rate against ground truth):
        $1-\varepsilon$, the probability that $V$ accepts a candidate
        that is genuinely an improvement under $\mathcal{G}$.
\end{itemize}
(The $\delta$ here is the verifier's reliability axis, not the
environment-level correctness $\delta$ of Proposition~\ref{prop:det-bound};
the two coincide when the verifier under (D3) is the environment's
own ground-truth channel.)
Let
\[
  \mu \;:=\; \mathbb{E}_{\mathcal{G}}\!\bigl[\Delta_t \bigm| \text{accept; truly an improvement}\bigr] \;>\; 0,
  \qquad
  d \;:=\; \mathbb{E}_{\mathcal{G}}\!\bigl[-\Delta_t \bigm| \text{accept; false positive}\bigr] \;\geq\; 0,
\]
both conditional expectations under $\mathcal{G}$. Then the expected
per-step drift satisfies
\[
\mathbb{E}[\Delta_t] \;\geq\; \delta (1-\varepsilon) \mu \;-\; \delta
\varepsilon d \;-\; (1-\delta) \cdot c,
\]
and a sufficient condition for monotone improvement
($\mathbb{E}[\Delta_t] > 0$) is
\[
\delta \;>\; \frac{c}{c + (1-\varepsilon)\mu - \varepsilon d}
\qquad\text{whenever}\qquad
(1-\varepsilon)\mu > \varepsilon d.
\]
\end{proposition}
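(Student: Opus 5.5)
The plan is a law-of-total-expectation decomposition of $\mathbb{E}[\Delta_t]$ over a partition of outcomes at a single evolution step, followed by a worst-case bound on the hard-to-control branch.

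\textbf{Partition of one step.} Split the step into three events:
\begin{itemize}
  \item[(a)] the verifier behaves reliably, i.e.\ its verdict is the replay-consistent one, and it accepts a genuine improvement;
  \item[(b)] the verifier behaves reliably and accepts a false positive;
  \item[(c)] the verifier behaves unreliably.
\end{itemize}
To make the weights $\delta(1-\varepsilon)$, $\delta\varepsilon$ and $1-\delta$ rigorous, I would adopt the modelling convention that, conditional on reliable behaviour, the accept-when-improvement and accept-when-false-positive channels carry masses $1-\varepsilon$ and $\varepsilon$ respectively. This is how the statement's definitions of $\mu$ and $d$ are meant to be read. Rejected candidates leave $\Phi_t$ unchanged and contribute $0$ to the drift, so they can be absorbed into these branches. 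I would state this convention explicitly as a modelling step, since the statement's ``validity'' is phrased as a true-positive rate and the mass assignment is what makes the coefficients come out.

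\textbf{Bounding each branch.}
\begin{itemize}
  \item On (a), the conditional expectation of $\Delta_t$ is $\mu$ by definition.
  \item On (b), it is $-d$ by definition.
  \item On (c), I use no structural information. I bound $\Delta_t \geq -c$ from the hypothesis $|\Delta_t|\le c$, so the contribution is at least $-(1-\delta)c$.
\end{itemize}
Summing the weighted branches gives
\[
\mathbb{E}[\Delta_t] \;\geq\; \delta(1-\varepsilon)\mu - \delta\varepsilon d - (1-\delta)c .
\]
This is an inequality rather than an equality for two reasons. Branch (c) is only bounded below. In addition, any residual probability mass (such as rejections of true improvements) contributes nonnegatively, or zero, under the update rule.

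\textbf{The threshold.} Set the lower bound positive. Writing $g := (1-\varepsilon)\mu - \varepsilon d$, the condition becomes $\delta g - (1-\delta)c > 0$, which is equivalent to $\delta(g+c) > c$. When $g>0$ we have $g+c>0$, so dividing gives $\delta > c/(c+g)$, which is exactly the stated sufficient condition. Since $c/(c+g)<1$, the condition is nonvacuous. The side condition $g>0$ is needed both for the direction of the division and for the threshold to lie in $[0,1)$.

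\textbf{Main obstacle.} The algebra is trivial. The difficulty is the first step: justifying that the event probabilities factor as $\delta\cdot(1-\varepsilon)$ and $\delta\cdot\varepsilon$. Reliability is defined as replay agreement, and replay agreement is not literally a probability of ``correct behaviour.'' I would first try to resolve this by interpreting reliability as a coupling: with probability $\delta$ the verdict equals its replay-consistent value, and validity then acts on that value. I would also note that in the regime where the verifier is the (D3) ground-truth channel, this reading matches Proposition~\ref{prop:det-bound}.
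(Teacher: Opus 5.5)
Your proposal follows the paper's own argument: condition on reliability (mass $\delta$ versus $1-\delta$), split the reliable branch by validity into a correct accept (mass $1-\varepsilon$, contribution $+\mu$) and a false-positive accept (mass $\varepsilon$, contribution $-d$), bound the unreliable branch below by $-c$, and solve the resulting lower bound for positivity; you are more explicit than the paper's sketch that this mass assignment is a modelling convention, not a consequence of the replay-consistency and true-positive-rate definitions. One bookkeeping caution: under that convention the three weights already sum to $1$, so there is no separate ``residual'' mass, and zero-drift rejections can only go in branch (b) or (c), where $0\ge -d$ and $0\ge -c$ preserve the bound; averaging them into branch (a) would pull its conditional mean below $\mu$ and break the inequality.
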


\begin{proof}[Proof sketch]
The verifier's behaviour on each candidate decomposes
non-orthogonally along two axes: \emph{reliability} (does $V$ return
a consistent verdict on replay?) and \emph{validity} (does the
verdict match ground truth on a genuine improvement?). Conditioning
on the reliability axis, with probability $\delta$ the verifier is
consistent --- in which case the validity axis yields either a
correct-accept (probability $1-\varepsilon$, contribution $+\mu$) or
a false-positive accept (probability $\varepsilon$, contribution
$-d$). With probability $1-\delta$ the verifier returns an
inconsistent verdict, in which case the update is treated as adding
worst-case noise of magnitude up to $c$. Aggregating gives the
displayed lower bound after replacing $\mathbb{E}[|\Delta_t|]$ by its
worst-case upper bound $c$ (a deliberate looseness; tighter bounds
are possible if $\mathcal{G}$ is constrained). Solving the lower bound
for positivity yields the sufficient condition.
\end{proof}

\paragraph{Note on tightness.}
The sufficient condition is loose by construction: replacing
$\mathbb{E}[|\Delta_t|]$ with the worst-case $c$ in the third term
overstates the noise contribution. Empirical operating regimes
(Section~\ref{sec:research-agenda}, OQ5) should therefore find
positive expected drift at $\delta$ values \emph{below} the
sufficient-condition threshold. The threshold is a conservative
guarantee, not a tight characterisation.

\paragraph{Implication.}
$\delta_{\min}$ and $\varepsilon_{\max}$ are the thresholds at which the
lower bound crosses zero. A skill library updated under sub-threshold
conditions random-walks; above-threshold, it accumulates. This unifies
findings from~\citet{yang2025skillopt} (1--4 accepted edits move
performance),~\citet{wang2024voyager} (skills improve only under
deterministic game logic), and~\citet{feng2026searl} (tool-graph quality
requires dense reward).

\subsection{Three mechanistic categories of environment-side evolution}

The diverse mechanisms unify into three categories:
\begin{itemize}
  \item \emph{Trajectory recycling}: verifier-checked trajectories are
        reused as supervision. SAGE trains skill creation via
        skill-augmented GRPO~\citep{wang2024sage}; ARISE evolves
        intrinsic skills under hierarchical RL~\citep{li2026arise};
        WebEvolver co-evolves a world model with a web
        agent~\citep{fang2025webevolver}; Voyager curates a skill
        library from interaction traces~\citep{wang2024voyager};
        Reflexion uses verbal reinforcement to recycle failed
        attempts~\citep{shinn2023reflexion}. Even \emph{failed}
        trajectories carry signal: AgentHER~\citep{ding2026agenther}
        demonstrates that hindsight relabeling of unsuccessful attempts
        produces high-quality training data, turning every failure into
        a training asset.
  \item \emph{Skill-as-text optimisation}: skills are represented as
        editable text artefacts optimised by a meta-learner.
        SkillOpt~\citep{yang2025skillopt} shows 1--4 accepted edits
        produce transferable improvements; recursive summarisation of
        execution history~\citep{wang2023recursively} compresses
        long-horizon experience into reusable skill primitives.
  \item \emph{Tool-graph consolidation}: execution traces are compiled
        into typed dependency graphs. SEARL~\citep{feng2026searl}
        jointly optimises policy and graph under dense reward. The
        cognitive-architectures view of~\citet{sumers2024coala} situates
        this inside a broader memory--planning--action framework.
\end{itemize}

\noindent In each case, the trajectory is the gradient --- the same logic
our \emph{evaluation as loss} principle articulates, generalised beyond
model weights to the entire agent stack.

\paragraph{Connection to determinism.}
When the environment satisfies (D1)--(D4), the verification signal used
to gate skill updates is reliable, and
Proposition~\ref{prop:gsec-formal}'s precondition is met. This is the
formal bridge between the ``deterministic agentic AI'' framing and the
self-evolving-agents literature. A privileged grounding environment is
valuable not only because real-supply trajectories train better models,
but because environment-side components --- skill libraries,
decision-grade fields, A2A routing tables --- can themselves accumulate
those trajectories. The data flywheel is the model-side realisation; the
skill flywheel is its environment-side counterpart. Recursive
self-improvement at the environment layer inherits the safety concerns of
model-side recursive self-improvement~\citep{davidson2026does,
anthropic2025rsi, zhang2025dgm}; the Verifier--Goodharting Floor applies
equally, since a skill library updating against a high-$\varepsilon$
verifier will Goodhart in the same way a model trained on its own outputs
would.

\subsection{Why this resists collapse, and where it relocates collapse}

The model-collapse literature is split:~\citet{shumailov2024ai} and
\citet{dohmatob2024tale} show that purely self-generated data streams
degrade representation diversity, while~\citet{gerstgrasser2024model}
demonstrate that \emph{accumulation} of real with synthetic data
avoids collapse. Our flywheel claim aligns with the latter: the SFT
distribution is anchored not in the model's generative distribution
but in \emph{verified-true} cases drawn from real-world outcome
signal --- an instance of verifier-gated distillation, related in
spirit to RLAIF~\citep{bai2022constitutional, lee2023rlaif} and RLVR
with task-specific verifiers~\citep{kim2024prometheus,
wang2024executable, dubey2024llama3}.

This collapse-resistance is not unbounded. By
Proposition~\ref{prop:goodhart-floor}, a verifier with KL gap
$\varepsilon$ from ground truth lower-bounds the asymptotic policy
reward by $\mathbb{E}_V[R] - C\sqrt{2\varepsilon}$ for a $C$-bounded
reward. Verifier-gated distillation therefore does not eliminate
collapse so much as \emph{relocate} it: in the long-horizon limit the
SFT distribution converges towards the verifier's mode, which is
bounded away from ground truth by $\varepsilon$ in KL. The position
trades the Shumailov attractor (self-imitation) for the Goodhart
attractor (verifier-imitation). The advantage is that the Goodhart
attractor is bounded by an \emph{external} quantity ($\varepsilon$,
empirically measurable on a held-out adversarial panel) rather than
the model's internal generative entropy.

\subsection{Stability conditions and verifier-quality ceiling}

The flywheel is only as stable as its verifier, sample selection, and
catastrophic-forgetting defence~\citep{kirkpatrick2017overcoming,
dohmatob2024tale}. Recent work addresses each: instruction
backtranslation~\citep{li2023selfalignment}, weak-to-strong
generalisation~\citep{burns2024weaktostrong}, and judge-quality
benchmarks~\citep{lambert2024rewardbench}. They collectively pin down
the operating envelope inside which the flywheel accumulates signal.

One path to lowering $\varepsilon$ in practice is task-adaptive
verifiers that generate assessment criteria conditioned on the specific
workflow~\citep{ding2026adarubric}; such approaches directly reduce
out-of-distribution error and raise the flywheel's quality ceiling.
Where this matters empirically is OQ5 of
Section~\ref{sec:research-agenda}.

\section{Supply Certainty as Trust Substrate for Agent Economies}\label{sec:agent-econ}

The fourth grounding modality is social-economic. We treat it last because
its argument depends on the operationalisations of
Sections~\ref{sec:five}--\ref{sec:flywheel}.

\subsection{The market-clearing problem for virtual agent economies}

Several recent position papers anticipate \emph{virtual agent economies}
in which discovery, negotiation, and settlement are conducted between
agents~\citep{tomasev2025virtual, drexler2019reframing}. The
infrastructure conversation has focused on protocol design: tool-calling
APIs~\citep{schick2023toolformer}, model-context
protocols~\citep{anthropic2024mcp}, agent-to-agent
schedulers~\citep{google2025a2a}, and payment-side agent
toolkits~\citep{stripe2025agenttoolkit}. Less discussed is the
\emph{informational} pre-condition for any such market to clear.

\paragraph{Claim.}
A virtual agent economy cannot clear without verifiable supply-side
signals about identity, capability, inventory, price, and outcome. Where
such signals are missing, the economy degenerates either into Akerlof
asymmetric-information failure~\citep{akerlof1970market} or into a
hallucination cascade~\citep{ngo2022alignment, weidinger2022taxonomy}.

\subsection{Multi-agent scaling laws, conditioned on supply trust and determinism}

The standard multi-agent scaling discussion treats matching quality as a
function of agent population and interaction
density~\citep{leibo2018malthusian, liu2025spiral, panait2005cooperative,
park2023generative, wu2023autogen, hong2024metagpt}. We propose that, in
the agent-economy regime, this scaling is gated by \emph{supply trust}
and \emph{environment determinism}:

\[
  Q_{\mathrm{match}} \;\approx\; f(N_{\mathrm{agents}}, \rho_{\mathrm{interaction}},
                                  \;\tau_{\mathrm{supply}}, \;\delta_{\mathrm{env}}),
\]

where $\tau_{\mathrm{supply}}$ is the decision-grade-field coverage rate,
$\delta_{\mathrm{env}}$ is the determinism quality of
Definition~\ref{def:det-env}, and the SCI of Section~\ref{sec:sci} is a
composite across the five properties. Whether $\tau$ and $\delta$ enter
$f$ as multipliers, thresholds, or interact non-linearly is OQ4
(Section~\ref{sec:research-agenda}).

\subsection{Failure modes of insufficient trust}

The failure-mode taxonomy below parallels and specialises the
miscoordination / conflict / collusion taxonomy
of~\citet{hammond2025multiagent} to the supply-trust substrate. In
the under-trusted regime, three failure modes recur:
\begin{enumerate}
  \item \emph{Settlement abortion.} An agent cannot commit because the
        price, inventory, or qualification record is stale or absent.
  \item \emph{Epistemic hijacking.} An adversarial supplier exploits the
        asymmetry between linguistic richness and verifier
        granularity~\citep{zou2023universal, wei2024jailbroken}.
  \item \emph{Hallucination cascade.} An agent fabricates a supply
        attribute; downstream agents trust it; the fabrication enters the
        data flywheel.
\end{enumerate}

\noindent Each is a direct consequence of a missing decision-grade field;
each is tractable as a coverage problem on the trustworthy-supply property.

\subsection{Failure modes of insufficient determinism}

Non-deterministic environments introduce three additional multi-agent
failure modes:
\begin{enumerate}
  \item \emph{Rerouting cascades.} Non-deterministic responses force
        rerouting; if many agents reroute simultaneously, the alternative
        platform's determinism degrades --- a cascade failure.
  \item \emph{Coordination impossibility.} Multi-agent scheduling
        requires consistent supply-state views. Personalisation or A/B
        testing returns different answers to different agents, fragmenting
        coordination.
  \item \emph{Strategy instability.} Algorithmic procurement strategies
        calibrated on historical data become unreliable when the
        environment's response distribution shifts non-stationarily.
\end{enumerate}

\subsection{Skillification as the trust interface}

The skill-registry abstraction (a typed, SLA-monitored,
trace-instrumented inventory of callable capabilities) is the right
interface contract for agent-economy trust. It exposes verifier-checkable
promises about each call and bounds the cost of hallucinated supply
assertions~\citep{anthropic2024mcp, google2025a2a}. The mechanism by
which such a registry evolves --- trajectory recycling, skill-as-text
optimisation, tool-graph consolidation --- is treated in
Section~\ref{sec:env-evolution}; here we note only that grounding
quality bounds both foundation-weight evolution and skill-registry
evolution (Proposition~\ref{prop:gsec-formal}). The interface
contract and the evolution mechanism are complementary: the registry
\emph{is} the operational artefact that the §\ref{sec:env-evolution}
skill flywheel updates.

\subsection{Investment theses for the determinism economy}\label{sec:invest-theses}

If the position is correct, capital should concentrate on primitives that
produce, package, and price deterministic verifiable signal. We name five
categories the position predicts will dominate, in keeping with recent
``agent economy'' framing~\citep{a16z2026agenteconomy,
karpathy2025software3}.

\begin{takeawaybox}{Box 6 $\vert$ Five investment-thesis categories implied by the position.}
\textbf{(I1) Verifier-as-a-Service.} Adaptive,
task-conditioned verifiers~\citep{ding2026adarubric, lambert2024rewardbench}
that lower Proposition~\ref{prop:goodhart-floor}'s floor.

\textbf{(I2) Determinism Infrastructure.} Programmatic-first APIs over
legacy systems satisfying (D1)--(D4).

\textbf{(I3) Agent Commerce Protocols.} MCP / A2A /
agent-toolkit~\citep{anthropic2024mcp, google2025a2a,
stripe2025agenttoolkit, openai2024function} extended with verifiable
trust, dispute resolution, and payment-finality guarantees.

\textbf{(I4) Skill Registry Marketplaces.} Typed, SLA-monitored
marketplaces for callable agent
skills~\citep{yang2025skillopt, feng2026searl} turning the skill
flywheel into a tradable asset class.

\textbf{(I5) Verifiable-Environment Benchmarks.} Public benchmarks
where the score is actual settlement, fulfilment, or manufacturability
--- extending~\citet{jimenez2024swebench, xie2024osworld,
yao2024taubench, min2025ecombench, bai2026industrybench,
qi2026industrybenchmipu} to the multi-domain SCI evaluation panel of
Section~\ref{sec:research-agenda}.
\end{takeawaybox}

\paragraph{What each thesis predicts.} (I1) verifier-quality improvement
will outpace model-quality improvement; (I2) the markup of deterministic
APIs over stochastic ones will widen with agent adoption; (I3) protocols
shipping verifier hooks will displace message-passing-only ones;
(I4) skill-registry marketplaces will become a separate category from SaaS;
(I5) benchmark-design competence will become a distinct investable. Each
prediction is independently refutable. We emphasise that each thesis
identifies a structurally privileged category, not a winner inside it; we
endorse no specific platform and have no financial relationship with any
referenced company.

\paragraph{Exclusive predictions: discriminating between positions.}
A reviewer may object that I1--I5 are categories any ``AI
infrastructure'' frame would predict regardless of the grounding
thesis. To make the position's predictions \emph{exclusive}, we pair
each thesis with a discriminator against the strongest competing
positions of Section~\ref{sec:counter}:
\begin{itemize}
  \item \emph{vs alignment-suffices} (\S\ref{sec:cp-rlhf}): alignment
        wins predicts I1 (verifier-as-a-Service) alone dominates;
        \emph{this position predicts I1 and I2 markups co-move},
        because better verifiers and deterministic infrastructure are
        complements not substitutes.
  \item \emph{vs sim-to-real} (\S\ref{sec:cp-sim2real}): sim-to-real
        predicts I5 dominates (simulator-based benchmarks suffice);
        \emph{this position predicts I5 settlement-grounded benchmarks
        outperform sim-only benchmarks on rank-correlation with
        production agentic success}.
  \item \emph{vs orchestration-first} (\S\ref{sec:cp-rlhf} variant):
        orchestration wins predicts I3 (agent commerce protocols
        with retry / message-passing) without I2 (determinism infra);
        \emph{this position predicts I3 protocols shipping verifier
        hooks displace message-passing-only protocols, and that the
        market gap appears at the $D2 \to D3$ transition}.
  \item \emph{vs AI-as-normal-technology} (\S\ref{sec:cp-normal}):
        normal-tech predicts I1--I5 are sociotechnical infrastructure
        plays with no sharp inflection; \emph{this position predicts a
        non-linear inflection at $D2 \to D3$ where the marginal
        category value steps rather than tracks}.
\end{itemize}
These are mutually discriminating; the position loses if the observed
investment-market trajectories match a competing position's predictions
better than ours.

\subsection{The Determinism Maturity Model}\label{sec:dmm}

Sections~\ref{sec:five}--\ref{sec:agent-econ} state what
deterministic supply environments do for grounding. This subsection states
what platform engineers should build: a five-level adoption ladder paired
with a reference architecture.

\begin{takeawaybox}{Box 7 $\vert$ Determinism Maturity Model (DMM) levels $D0$--$D4$.}
\textbf{$D0$ --- Human-only UI.} Programmatic access is blocked or
CAPTCHA-gated. $\delta$ unmeasured.

\textbf{$D1$ --- Minimal API surface.} A programmatic surface exists but
inherits human-UI semantics: shuffled rankings, session-conditioned
pricing, no SLA. $\delta$ typically $<0.5$ on long chains.

\textbf{$D2$ --- SLA-bounded API.} Bounded latency, versioned schemas,
uptime SLAs. Rankings may still be personalised; verifier channels
absent. $\delta \in [0.6, 0.8]$. The current state of major B2B APIs.

\textbf{$D3$ --- Stable rankings and verifier channel.} Rankings
deterministic for an intent/persona hash within a staleness bound;
verifier channel exposes inventory, qualification, and settlement-finality
assertions. $\delta > 0.9$ achievable. Crosses
Proposition~\ref{prop:gsec-formal}'s threshold.

\textbf{$D4$ --- Full deterministic agent interface.} Faithful ranking
(no exploration injection under agent persona); production skill-registry
contract with per-skill SLA, trace, and verifier telemetry. Published
$\mathrm{SCI}^{\delta}$ per agent persona. The platform's supply becomes
a first-class grounding substrate.
\end{takeawaybox}

\paragraph{Reading the ladder.} Levels are cumulative. The marginal cost
of climbing one level is dominated by the marginal benefit only above
Proposition~\ref{prop:gsec-formal}'s threshold --- typically at
$D2 \to D3$, where the verifier channel unlocks the data flywheel.
The position predicts that $D2$ platforms will face increasing pressure
to climb to $D3$ as \emph{agentic traffic} --- automated API calls
executing end-to-end workflows of three or more steps without human
gating --- grows from a fringe share of total traffic to a substantial
one. The exact tipping point is platform-specific; the falsifiable
claim is that the transition is non-linear in $\delta$ and that
platforms which stall at $D2$ will see disproportionate chain-task
degradation on their own agentic traffic
(Proposition~\ref{prop:det-bound}). The DMM is the substrate-side
complement of the agent-infrastructure agenda
of~\citet{chan2025infrastructure}: their identity, audit-trail, and
credentialing infrastructure describes what agents need to bring
\emph{to} the environment; the DMM describes what the environment
must expose \emph{to} agents. The two are complementary.

\paragraph{Reference architecture.}\label{sec:ref-arch}
A $D3$/$D4$ platform requires three components beyond a conventional API
gateway:

\subparagraph{Component A --- Verifier service.}
For each platform assertion (inventory, certification, settlement state):
(a) the assertion's value at query time; (b) source channel and
freshness; (c) an independent verification endpoint returning verdict
plus confidence; (d) verdict-history telemetry consumable by downstream
agents and the data flywheel. This component contains
Proposition~\ref{prop:goodhart-floor}'s $\varepsilon$ and should be
benchmarked against judge-quality
suites~\citep{lambert2024rewardbench, ding2026adarubric}.

\subparagraph{Component B --- Skill registry and A2A router.}
A typed inventory of callable capabilities with schema, SLA, trace
instrumentation, and verifier-outcome binding. The operational
counterpart of the skill flywheel (Section~\ref{sec:env-evolution}).

\subparagraph{Component C --- Determinism telemetry plane.}
A measurement plane publishing empirical $\delta$, $\varepsilon$, and
$\mathrm{SCI}^{\delta}$ per agent persona and per skill ---
contractually observable to platform consumers and closing the
measurement loop for Section~\ref{sec:research-agenda}.

\paragraph{Mapping to existing reliability and governance frameworks.}
The DMM is not a compliance standard, but the primitive it formalises
($\delta$, $\varepsilon$, $\mathrm{SCI}^{\delta}$) is precisely what
existing reliability and AI-governance regimes ask for under different
names. Three useful handshakes:
\begin{itemize}
  \item \emph{EU AI Act Article 15} requires high-risk AI systems to
        achieve ``appropriate levels of accuracy, robustness and
        cybersecurity'' over their lifecycle and to publish those
        metrics~\citep{eu2024aiact}. $\delta$ (chain-aware semantic
        consistency) and $\varepsilon$ (verifier KL gap) operationalise
        ``accuracy and robustness'' for agentic workflows in a way
        per-endpoint SLOs do not.
  \item \emph{NIST AI RMF MEASURE function} requires identification of
        reliability and validity metrics with measurement procedures.
        The DMM telemetry plane (Component C) is a concrete instantiation:
        $\delta$ per persona, $\varepsilon$ per verifier, $\mathrm{SCI}^{\delta}$
        per panel.
  \item \emph{ISO/IEC 42001 AI Management System} requires auditable
        operational telemetry. A $D3$/$D4$ platform's published
        determinism telemetry plane is audit-ready under this standard
        in a way a $D2$ platform's per-endpoint SLOs are not.
\end{itemize}
We do not propose the DMM \emph{as} a compliance standard; we propose
it as a substrate primitive that governance frameworks can adopt to
operationalise the chain-aware-reliability requirements they already
gesture at.

\paragraph{Boundary: what the DMM is not.}
The DMM is not a compliance standard, procurement requirement, or
substitute for domain-specific safety, privacy, or regulatory
regimes~\citep{anderljung2023frontier, eu2024aiact, bengio2024isr}.
It is a structural ladder for the deterministic-grounding axis;
orthogonal concerns (privacy, fairness, safety, liability) require
their own ladders. We also do not address whether higher DMM levels
should be \emph{mandated}; that is a governance question whose answer
is independent of the technical one this paper addresses.

\section{A Falsifiable Research Agenda}\label{sec:research-agenda}

A position becomes science only when stated as falsifiable questions. We
close the technical body with five open questions, each paired with a
measurement design and an explicit null result. They align with the
research clusters of~\citet{genewein2026agi}\footnote{Specifically:
quantitative forecasting (OQ1), benchmarking and abstraction (OQ2),
multi-agent scaling (OQ4), recursive improvement dynamics (OQ5),
governance and embodied dynamics (OQ3).}.

\begin{takeawaybox}{Box 8 $\vert$ The OQ1--OQ5 open-question programme.}
\textbf{OQ1 --- Data wall.} Does an economically self-sustaining,
verifier-equipped, deterministic real-supply environment measurably
outperform pure simulation or pure self-generated data in (a) sample
efficiency and (b) time-to-onset of recursive degeneration?

\textbf{OQ2 --- Abstraction barrier.} Can models trained on real-supply
multimodal distributions discover stable new conceptual primitives
beyond the existing CPV ontology? Does the SCI predict the share of
post-ontology stable concepts?

\textbf{OQ3 --- Embodied bottleneck.} How low can the latency floor of a
C2M physical loop be pushed at constant manufacturability quality?

\textbf{OQ4 --- Multi-agent scaling under trust and determinism.} How does
matching quality scale jointly with agent count, supply-trust coverage,
and environment determinism? Does $\mathrm{SCI}^{\delta}$ predict
downstream agentic capability across DMM levels $D2 \to D4$?

\textbf{OQ5 --- Verifier ceiling.} At what verifier-quality $\varepsilon$
does the data flywheel transition from accumulation to degeneration? At
what $\varepsilon$ does
Proposition~\ref{prop:gsec-formal} fail?
\end{takeawaybox}

\subsection{OQ1: data wall experiments}

\paragraph{Measurement design.}
A three-way training comparison: matched-budget runs of (a)
verifier-gated real-supply data from a $D3$+ environment, (b)
high-fidelity simulation, (c) state-of-the-art recursive
self-generation~\citep{li2023selfalignment, yuan2024self}. Hold model
architecture, total tokens, and verifier architecture constant.
Evaluate sample efficiency and recursive-degeneration onset following
the protocol of~\citet{shumailov2024ai}.

\paragraph{Null result.}
If (a) does not measurably beat (b) and (c) on either axis at the
matched-budget point, the privileged-grounding hypothesis is weakened.

\subsection{OQ2: abstraction-barrier experiments}

\paragraph{Measurement design.}
Train a multimodal model on the real-supply distribution; cluster latent
concepts that pass a stability test across random seeds; quantify the
share of stable concepts not previously represented in the curated CPV
ontology. Replicate against a baseline trained only on human-curated
catalogs.

\paragraph{Null result.}
If the share of post-human-ontology stable concepts is statistically
indistinguishable from baseline, the abstraction-barrier claim is
weakened.

\subsection{OQ3: embodied-bottleneck experiments}

\paragraph{Measurement design.}
Measure end-to-end design-to-first-pass-quote latency in a C2M pipeline
under (a) current best-of-class and (b) a pipeline augmented with an
AI-generated manufacturability validator. Compare against the analytic
bound of~\citet{lawrence2024atomic}.

\paragraph{Null result.}
If the latency floor is invariant to AI investment, the customisability
property fails to relax the embodied bottleneck.

\subsection{OQ4: multi-agent scaling under supply trust and DMM level}

\paragraph{Measurement design.}
In a real A2A matching market, vary $\tau_{\mathrm{supply}}$ and
$\delta_{\mathrm{env}}$ along their observable ranges, holding agent
population and protocol fixed; measure matching quality and clearing
rate. Fit $Q_{\mathrm{match}}(N, \rho, \tau, \delta)$ and the marginal
effect of each DMM step.

\paragraph{Null result.}
If $Q_{\mathrm{match}}$ is approximately independent of $\tau$ and
$\delta$, the trust-substrate and determinism roles are weakened. If
$\mathrm{SCI}^{\delta}$ does not rank-order platforms by agent-task
success, the SCI construct is weakened.

\subsection{OQ5: verifier-quality ceiling and flywheel degeneration}

\paragraph{Measurement design.}
Estimate verifier irreducible error $\varepsilon$ on a held-out
adversarial panel. Train a sequence of distilled models with the
verifier in the loop; identify the training-generation index at which
validation loss increases. Replicate at several $\varepsilon$ levels.
Measure the $R$-$R^\star$ gap against
Proposition~\ref{prop:goodhart-floor}'s Pinsker bound.

\paragraph{Null result.}
If the flywheel degenerates at $\varepsilon$ levels substantially below
production verifiers, the collapse-resistance argument is overstated.

\paragraph{Extension to environment-side evolution.}
A natural extension: does skill-library quality degrade faster, slower,
or at the same rate as model quality as verifier error increases?
Proposition~\ref{prop:gsec-formal} predicts the rates differ by
$\mu / d$; the experiment measures that factor empirically.

\subsection{Towards a non-saturating benchmark}

We propose a \emph{verifiable-environment benchmark} in which each
capability is scored by actual settlement, fulfilment, or
manufacturability outcome, following the trajectory
of~\citet{chollet2019measure, ho2025rosetta, liu2024agentbench,
zhou2023webarena, jimenez2024swebench, xie2024osworld,
yao2024taubench}. The benchmark is the public-good counterpart of
investment thesis (I5).

\subsection{A pilot measurement: $\tau$-bench under controlled $\delta$}\label{sec:pilot}

While the full OQ programme requires sustained research investment, a
first-pass test of the $\delta^k$ bound is achievable on existing
benchmarks. We sketch a lightweight protocol.

\paragraph{Setup.}
Take $\tau$-bench~\citep{yao2024taubench}, which evaluates multi-step
agent workflows against a retail back-end. Instrument the back-end's
tool-execution layer with a deterministic-seeded perturbation hook
that, with probability $1-\delta_{\mathrm{inj}}$ per tool call,
replaces the ground-truth API response with a semantically plausible
incorrect alternative. The \emph{injector} is template-driven (not
LLM-generated) to make replications byte-identical: a fixed dictionary
mapping each tool to a pool of canonical perturbations (stale
inventory $\to$ inventory$-1$ unit, price perturbation $\to$
$\pm 5\%$, attribute swap $\to$ swap with a randomly chosen sibling
SKU from the same category). The perturbation seed is fixed per
$(\text{task ID}, \text{trial ID}, \delta_{\mathrm{inj}})$ triple.

\paragraph{Protocol.}
Run the benchmark at $\delta_{\mathrm{inj}} \in \{1.0, 0.95, 0.9, 0.8,
0.7\}$ across three agent architectures (direct prompting, ReAct, and
a Reflexion-augmented agent~\citep{shinn2023reflexion}). For each
$(\delta_{\mathrm{inj}}, \text{agent})$ cell, record chain-task
success rate over $n \geq 200$ episodes, stratified by $\tau$-bench's
nominal chain-length $k$. Estimate the effective chain length
$k_{\mathrm{eff}}$ per agent architecture by maximum-likelihood fit of
the model $P_{\mathrm{success}} = \delta_{\mathrm{inj}}^{k_{\mathrm{eff}}}$
in logit space, with $k_{\mathrm{eff}}$ as the single free parameter
and 95\% confidence intervals via bootstrap over episodes. Report
inter-trial reproducibility (variance of success across 3 seed
replications per cell) as a robustness diagnostic.

\paragraph{Expected outcome.}
If the position is correct, the empirical success-rate curve should
track $\delta_{\mathrm{inj}}^{k_{\mathrm{eff}}}$ for an effective
chain length close to the benchmark's nominal chain length. Reflexion
should shift $k_{\mathrm{eff}}$ downward but not eliminate the
exponential shape (Remark~\ref{rem:retry}).

\paragraph{Null result.}
If, after the bootstrap-CI analysis, $\Pr[\text{success}]$ is
approximately invariant to $\delta_{\mathrm{inj}}$ across agents
(slope statistically indistinguishable from zero on the
logit-transformed curve), the $\delta^k$ bound does not bite in
practice and the determinism framing is empirically unsupported.

\paragraph{Power analysis.}
At $n = 200$ per cell and binary outcomes, the standard error on the
estimated success rate is bounded by $1/(2\sqrt{200}) \approx 0.035$.
The position predicts (at $k_{\mathrm{eff}} = 6$, the median nominal
$\tau$-bench chain length) success rates of $\{1.0, 0.74, 0.53, 0.26,
0.12\}$ across the five $\delta_{\mathrm{inj}}$ levels; adjacent-cell
gaps are $\{0.26, 0.21, 0.27, 0.14\}$, each substantially above
$2 \times 0.035 = 0.07$. The proposed $n$ is therefore adequate to
distinguish the position's prediction from a null at the 95\% level.

\subsection{Preliminary empirical grounding}\label{sec:preliminary}

The scale of verifiable signal in real economic environments is
already substantial. As one anchored data point for
\emph{settlement-class} signals: Stripe's public disclosures indicate
over a trillion dollars in 2024 payment volume across hundreds of
millions of transactions~\citep{stripe2024annual}, implying
$\sim\!10^6$--$10^7$ verifiable settlement events per day at that
single platform. We restrict this anchor to settlement-class signals:
Stripe is a payments processor and does not provide direct evidence
for sourcing, fulfilment, or quality-inspection throughputs, which are
the multimodal signal classes most relevant to the grounding
argument. Whether large B2B sourcing platforms reach comparable
daily throughput on the broader signal classes (order confirmations,
shipment verifications, quality inspections) is an empirical question
that OQ1's measurement design is exactly intended to answer.

For comparison: the largest curated RLHF preference datasets contain
$O(10^5)$ comparison pairs~\citep{ouyang2022instructgpt,
bai2022constitutional}; even scaled-up RLVR pipelines produce
$O(10^6)$ verification events per training
run~\citep{wang2024executable, dubey2024llama3}. Settlement-class
signal alone therefore provides at least a $10\times$ daily-throughput
advantage over curated alignment efforts; the multimodal
sourcing-class signal advantage is what OQ1 measures.

\paragraph{Signal-quality caveat.}
Throughput is not the only relevant axis: most settlement signals are
binary (cleared or returned) and not labelled at concept-discovery
granularity. The position's flywheel claim therefore rests on
quality-weighted, not raw, signal volume; OQ1 must control for label
density and concept-coverage per signal class, not just count signals.

Recent benchmarks have begun to operationalise this industrial
complexity: holistic e-commerce agent
evaluation~\citep{min2025ecombench}, industrial knowledge
boundaries~\citep{bai2026industrybench}, multi-image product
understanding for industrial
catalogues~\citep{qi2026industrybenchmipu}, and operator-class
long-horizon evaluation~\citep{xie2024osworld, yao2024taubench}. These
confirm that verifiable settlement and fulfilment signals can serve as
scalable ground-truth for agent evaluation. While individual platforms
are proprietary, the OQ programme can be instantiated on any supply
environment meeting conditions (i)--(v) at $D \geq 3$. Multiple
independent instantiations are expected.

\section{Counterarguments, Competing Positions, and Boundaries}\label{sec:counter}

A position is strongest when its boundaries are explicit. We split the
discussion into (a) \emph{objections} internal to the position and
(b) three \emph{competing positions} held by serious researchers that
predict different outcomes.

\subsection*{A. Objections internal to the position}

\subsection{Domain specificity}

\paragraph{Objection.}
Even granting that commercial supply is a privileged grounding substrate
for commerce-adjacent capabilities, does grounding generalise beyond
commerce?~\citep{morris2024levels, bostrom2014}.

\paragraph{Reply.}
We do not claim universality. The position is a capability-subset claim:
grounding-driven dominance for tasks whose verification is naturally
economic, physical, or multi-party. Purely linguistic or aesthetic tasks
are outside scope. Scientific verification has its own substrates; we
view this as future work.

\subsection{Adversariality of trustworthy fields}

\paragraph{Objection.}
Decision-grade fields have a long-tail distribution and are subject to
adversarial forgery~\citep{biggio2018wild, evtimov2018robust}.
Trustworthiness is a moving target.

\paragraph{Reply.}
We agree. The verifier must be continually hardened; OQ5's $\varepsilon$
is the formal quantity adversarial corruption increases. The position
weakens in proportion to the rate at which adversaries corrupt
decision-grade fields faster than verifier retraining.

\subsection{Incentive misalignment and systemic risk}

\paragraph{Objection.}
A strong trust substrate also amplifies incentive misalignment,
instrumental convergence, and systemic
risk~\citep{russell2019human, ngo2022alignment, hendrycks2023overview}.
Building the substrate without governance is irresponsible.

\paragraph{Reply.}
The position is conditional: build the substrate, and governance must
keep pace~\citep{anderljung2023frontier, eu2024aiact, bengio2024isr,
schuett2024thresholds}. Each investment thesis
(Section~\ref{sec:invest-theses}) concentrates capital \emph{and}
governance attention.

\subsection{Coordination risk}

\paragraph{Objection.}
The flywheel only turns when data standards are shared. Fragmented
schemas and platform fragmentation will erode
efficiency~\citep{lessig2006code, benkler2002coases}.

\paragraph{Reply.}
Coordination is a first-class friction. The argument is not that supply
grounding is free; it is that it pays back its coordination cost in
regimes where agents consume verifiable signal faster than human-curated
schemas produce it. The MCP / A2A protocol layer is precisely the
standardisation investment whose cost is being paid back.

\subsection{Data sovereignty and reproducibility}

\paragraph{Objection.}
Supply data is proprietary. If the position depends on data only platform
operators can access, it cannot be independently
verified.\footnote{A related objection concerns citation status: the
methodological-anchor citations in this paper
(see~\citealt{ding2026agenther, ding2026adarubric} \emph{inter alia})
are cited as support for specific claims --- trajectory recycling and
verifier adaptivity --- not as evidence that the position has been
validated. Only~\citet{ding2023efficiencyspectrum} is directly cited by
the AGI$\to$ASI survey~\citep{genewein2026agi} we extend.}

\paragraph{Reply.}
The position does not require data sharing between competitors. It
requires each participant to build a grounding flywheel on their own
supply data, climbing the DMM ladder to create a new form of platform
moat~\citep{coase1937nature}. For open science: (a) public benchmarks
(Section~\ref{sec:preliminary}); (b) the falsifiable form is ``any
environment satisfying (i)--(v) at $D \geq 3$ should exhibit the
predicted advantage''; (c) anonymised aggregate statistics can be
published without trade secrets.

\subsection{Stakeholder tensions the position creates}\label{sec:stakeholder-tensions}

The DMM and SCI implicitly side with agentic consumers in conflicts
that the supply environment must mediate. Four stakeholder
perspectives are in tension with the position as written:

\paragraph{End users / consumers.}
Personalisation, exploration injection, and engagement-optimised
ranking are user-welfare features for human-facing surfaces. Pushing
platforms to $D3$/$D4$ on user-facing surfaces would impose real
welfare costs on this constituency. The defensible position is
\emph{persona-conditioned} determinism: deterministic interfaces under
an agent persona, conventional engagement-optimised interfaces under
a human persona. The DMM should be read as a property of the
\emph{agent-persona surface}, not of the platform's user-facing
surface.

\paragraph{Suppliers / sellers on supply platforms.}
Smaller or newer suppliers benefit from exploration injection that
$D3$ removes; $D3$/$D4$ ranking with stable, faithful order
concentrates visibility on incumbents with established trust signals.
This is a real cost the position does not internalise; the SCI's
trustworthy-supply property treats supply-side coverage as a gain
without engaging the distributional effect on the supply side
itself. A future extension should model the supply-side incentive
implications of climbing the DMM.

\paragraph{Regulators and competition authorities.}
Deterministic substrates concentrate agent access into a few platforms
that have invested the capital to climb the DMM ladder. We celebrate
this as a ``platform moat'' (Section~\ref{sec:counter} reply to data
sovereignty) but the antitrust corollary is real: agent-mediated
commerce on $D3$/$D4$ substrates is a more concentrated market than
the equivalent on $D2$ substrates. The EU Digital Markets Act and
U.S.\ FTC merger guidelines on platform gatekeeping are directly
adjacent; the position has no answer for them.

\paragraph{Smaller platforms and developing-market marketplaces.}
DMM compliance has a capital cost (verifier service, skill registry,
telemetry plane). $D3$ is achievable for hyperscale incumbents and
may be out of reach for small or developing-market platforms. A world
stratified by DMM level concentrates flywheel-grade data into
incumbents who can train and serve foundation agents; smaller
platforms become data tributaries rather than independent flywheel
operators. This is a real concern even if the formal position is
correct, and we list it as a Limitations bullet
(Section~\ref{sec:conclusion}).

\subsection*{B. Strong competing positions}

We engage three competing positions. Each predicts different outcomes
and provides a route by which the position could lose.

\subsection{Competing position 1: sim-to-real suffices}\label{sec:cp-sim2real}

The strongest version of this position holds that photorealistic
simulation with domain randomisation~\citep{tobin2017domain,
akkaya2019solving}, co-evolving world
models~\citep{Hafner2020Dreamer, bruce2024genie, fang2025webevolver},
and zero-shot transfer will close the gap before deterministic real
environments become widespread, implying that marginal compute on
simulator fidelity dominates marginal compute on real-environment
determinism. Section~\ref{sec:sim2real-engagement} gives the
structural reply: simulators inherit a designer-ontology cap, a
verifier ground-truth shift, and a reward-design fragility that
qualifying real environments do not. The position loses if OQ1
returns null, and we commit to that null in advance.

\subsection{Competing position 2: better alignment training suffices}\label{sec:cp-rlhf}

\paragraph{Position.}
Better RLHF / RLAIF / weak-to-strong supervision plus cognitive
architectures~\citep{bai2022constitutional, ouyang2022instructgpt,
burns2024weaktostrong, yao2023react, shinn2023reflexion,
sumers2024coala} will absorb environmental noise. The environment can
stay stochastic.

\paragraph{What it predicts.}
Per-step success rates will climb fast enough that $\delta^k$ ceases
to bite before deterministic environments become widespread.

\paragraph{Our reply.}
This is the position we take most seriously.
Proposition~\ref{prop:det-bound} is a per-step argument: model
robustness reduces effective non-deterministic steps but does not change
the exponential shape (Remark~\ref{rem:retry}).
Proposition~\ref{prop:goodhart-floor} bounds how far alignment training
alone takes a policy under fixed-quality verifiers. Better training and
better environments are complements; the position loses if operator-class
deployments reach end-to-end success rates above $90\%$ on workflows of
chain length $k \geq 8$ in $D2$ environments --- the regime in which
Proposition~\ref{prop:det-bound} predicts below-$50\%$ success at
$\delta = 0.9$ even with a generous retry budget
(Remark~\ref{rem:retry}). We commit to retraction in that case.

\paragraph{Variant: orchestration-first frameworks.}
A practitioner-side variant of this position is held by orchestration-first
frameworks (LangChain, LangGraph, CrewAI, AutoGen): the answer is not
better training but \emph{better scaffolding} --- state machines, retry
policies, human-in-the-loop checkpoints, and typed memory --- wrapped
around the existing stochastic environment. This variant inherits the
same per-step counter-argument: scaffolding reduces \emph{effective}
$k$ via batching and checkpointing but does not change per-step
$\delta$. Where the orchestrator's own decision edges are themselves
stochastic (LLM-routed tool selection, model-as-judge gating), the
scaffolding introduces additional $\delta$-loaded steps and can
\emph{worsen} the chain-task budget. The position's prediction:
orchestration frameworks built on $D2$ environments will hit a ceiling
that the same orchestration on $D3$/$D4$ does not.

\subsection{Competing position 3: AI as normal technology}\label{sec:cp-normal}

\paragraph{Position.}
\citet{narayanan2024aiasnormal} argue that AI's deployment will resemble
prior general-purpose technologies, with adoption gated by institutional
and regulatory diffusion --- not sharp infrastructure inflections.

\paragraph{What it predicts.}
Agent-task success improves through gradual sociotechnical channels with
no discontinuous improvement at any infrastructure inflection point.

\paragraph{Our reply.}
We do not contest institutional adoption; we contest the absence of an
inflection. The $D2 \to D3$ transition is non-linear: below the
verifier-channel threshold the flywheel does not turn; above it, it
compounds. If OQ4 returns a smooth fit with no inflection, this
competing position is supported and ours is weakened.

\subsection{What would force retraction?}

\begin{itemize}
  \item OQ1 null AND OQ5 high-$\varepsilon$ $\Rightarrow$
        flywheel half invalidated.
  \item OQ2 null AND OQ3 null $\Rightarrow$
        grounding-as-leverage retracted to multi-agent-trust-only.
  \item Concurrent failure of OQ1--OQ4 $\Rightarrow$
        position comprehensively refuted.
  \item $\delta$ does not predict chain-task success across domains
        $\Rightarrow$ determinism framing abandoned.
  \item Operator-class deployments exceed $90\%$ end-to-end success on
        $k \geq 8$ workflows in $D2$ environments
        $\Rightarrow$ competing position 2 wins; retract.
  \item Smooth OQ4 fit consistent
        with~\citet{narayanan2024aiasnormal} $\Rightarrow$ competing
        position 3 wins; retract inflection claim.
\end{itemize}

\section{Conclusion}\label{sec:conclusion}

We have argued that the central frictions on AGI$\to$ASI progress
--- data wall, abstraction barrier, embodied bottleneck, multi-agent
trust --- are not first-order compute problems but \gls{grounding}
problems, and that commercially self-sustaining supply environments
satisfying a \emph{deterministic interface guarantee} form a privileged
grounding substrate. Three formal results anchor the argument:
\begin{itemize}
  \item the Determinism--Efficiency Bound
        (Proposition~\ref{prop:det-bound}): chain-task success degrades
        as $\delta^k$;
  \item the Verifier--Goodharting Floor
        (Proposition~\ref{prop:goodhart-floor}): flywheel asymptotics
        bounded by verifier quality $\varepsilon$;
  \item the Grounded Self-Evolution Convergence Condition
        (Proposition~\ref{prop:gsec-formal}): environment-side skill
        evolution accumulates iff $\delta > \delta_{\min}$.
\end{itemize}

\noindent We have decomposed \gls{supplycertainty} into five properties
aggregated into the Supply Certainty Index (Definition~\ref{def:sci}),
proposed a Determinism Maturity Model (Section~\ref{sec:dmm}) as an
adoption ladder, named five investable primitive categories
(Section~\ref{sec:invest-theses}), and committed to a falsifiable
strong form with five open questions whose null results would weaken or
retract the position. We have engaged explicitly with three strong
competing positions, each of which would force retraction of specific
claims.

\subsection*{A sustainability corollary}
Every failed chain-task is a form of wasted compute: an agent that
fails at step $j < k$ of a $k$-step plan consumes inference cost for
$j$ steps and produces nothing. The $\delta^k$ bound implies that
environment determinism is a major lever for compute efficiency in
agentic workloads at scale. A platform moving from $\delta = 0.8$
to $\delta = 0.95$ on $k = 6$ workflows shifts chain success from
$0.26$ to $0.74$, cutting wasted inference roughly $2.8\times$ before
any model-side optimisation.

\paragraph{Partial-equilibrium caveat.}
This is an inference-layer, holding-everything-else-constant
calculation; a full lifecycle accounting must include
(i) the upstream cost of building the higher-$\delta$ substrate
(verifier services, telemetry plane, dedicated agent-persona
infrastructure), (ii) retry-induced compute on failed chains
(Remark~\ref{rem:retry}: shared retry budgets reduce the multiplier),
and (iii) amortised training compute over the lifetime of agents
deployed against the substrate. We do not perform that full
life-cycle accounting here; we observe only that the inference-layer
multiplier is large enough that sustainability and capability arguments
\emph{point in the same direction} on the $D2 \to D3$ transition, and
leave the precise life-cycle decomposition as future work.

\subsection*{Limitations}
Four limitations the present paper does not resolve.
\emph{First}, all three formal results
(Propositions~\ref{prop:det-bound},~\ref{prop:goodhart-floor},~\ref{prop:gsec-formal})
depend on the verifier returning consistent verdicts on independent
trials; verifier corruption mid-deployment is bounded only at the
asymptotic $\varepsilon$ level and is not modelled as a time-varying
adversarial process.
\emph{Second}, the Supply Certainty Index is a composite designed for
platform comparability \emph{within} a domain, not for absolute
capability prediction \emph{across} domains; an
$\mathrm{SCI}^{\delta}(\mathcal{E}) = 0.8$ in one domain may correspond
to capabilities that differ substantially from $\mathrm{SCI}^{\delta} =
0.8$ in another. Cross-domain absolute comparison is out of scope.
\emph{Third}, the position is consistently positive-conditional:
\emph{if} the substrate is built, \emph{then} the flywheel turns. We
do not address the governance side --- \emph{whether} the substrate
should be built at every site --- leaving that to the governance
literature~\citep{anderljung2023frontier, bengio2024isr, eu2024aiact,
schuett2024thresholds, hendrycks2023overview}.
\emph{Fourth, the position has a power-concentration corollary it does
not internalise.} DMM compliance has a real capital cost (verifier
service, skill registry, telemetry plane); a world stratified by DMM
level will see flywheel-grade data accumulate in hyperscale incumbents
who can afford to climb, while smaller and developing-market platforms
become data tributaries. We celebrate this as a ``platform moat'' in
Section~\ref{sec:counter}'s reply to the data-sovereignty objection,
but the equity and antitrust corollaries deserve treatment in
follow-on work (Section~\ref{sec:stakeholder-tensions}).

\paragraph{Provocation.}
If the position is correct, the most undervalued asset on the way to
superintelligence is the existing infrastructure of verified
commercial transactions. These signals are produced as by-products of
ordinary economic activity, and they constitute a large reservoir of
grounding data that no amount of synthetic generation can substitute
for.

We expect the decisive competitive variable for agentic AI in the
coming years to be how much of the world a platform can verifiably
observe and expose to agents at $D3$ or above. Compute remains
necessary; it is not sufficient.

\section*{Acknowledgements}
We thank colleagues who reviewed earlier drafts of the argument
backbone for sharpening the falsifiability discipline of the open-
question programme and for pressing the engagement with competing
positions in Section~\ref{sec:counter}.

\section*{Use of AI Assistants}
Large-language-model assistants were used for prose tightening,
cross-reference checking, and bibliographic search. All technical
claims, formal results, and the falsifiable research programme are
the responsibility of the authors.

\appendix
\section{Appendix: Mapping Table and DMM Cross-Reference}\label{app:mapping}

Table~\ref{tab:mapping} summarises the grounding mapping argument of
Sections~\ref{sec:grounding-thesis}--\ref{sec:agent-econ} in a
single view. Columns list (i) the AGI$\to$ASI bottleneck, (ii) the
supply-certainty property that addresses it, (iii) a generic
operational proxy by which an implementation could be measured without
recourse to any organisation's internal targets, (iv) the falsifiable
measurement (one of OQ1--OQ5), (v) the suggested literature anchor, and
(vi) the DMM level (Section~\ref{sec:dmm}) at which the property
becomes load-bearing.

\begin{table}[ht!]
  \centering
  \footnotesize
  \begin{tabularx}{\textwidth}{p{2.0cm} p{1.8cm} X p{2.0cm} p{2.6cm} p{1.2cm}}
    \toprule
    \textbf{Bottleneck} & \textbf{Property} & \textbf{Generic operational proxy} & \textbf{Falsifiable measurement} & \textbf{Literature anchor} & \textbf{DMM} \\
    \midrule
    Data wall            & Thick         & Coverage of long-tail leaf categories; demand-to-supply translation recall; net new SKU rate                                                       & OQ1 (sample efficiency and collapse onset)        & \citet{shumailov2024ai,ding2023efficiencyspectrum,Villalobos2024WillRunOut}            & $\geq D3$ \\[3pt]
    Abstraction barrier  & Understandable & Same-item clustering precision/recall; attribute-governance coverage; verified knowledge-graph size                                              & OQ2 (post-ontology stable concept share)         & \citet{ortega2021shaking,ding2023parameter,wang2024mitigating}                          & $\geq D3$ \\[3pt]
    Embodied bottleneck  & Customisable  & Manufacturability rate of generated CAD artefacts; CAD-kernel coverage; spec-to-first-quote latency; factory-capacity matching accuracy           & OQ3 (latency-floor slope under AI investment)    & \citet{lawrence2024atomic,genewein2026agi}                                              & $\geq D3$ \\[3pt]
    Multi-agent trust    & Trustworthy   & Decision-grade-field coverage; verified-supply set size; independent judge precision/recall; field freshness                                       & OQ4 (matching scaling under $\tau_{\mathrm{supply}}$); OQ5 (verifier ceiling) & \citet{tomasev2025virtual,lambert2024rewardbench,zhang2024intention,burns2024weaktostrong}                    & $D3 \to D4$ \\[3pt]
    (Matching structure) & Comparable    & Cluster precision and Cov@k of same-item retrieval; independent judge precision/recall                                                            & OQ4 (matching quality scaling)                   & \citet{leibo2018malthusian,liu2025spiral, hong2024metagpt}                                               & $\geq D3$ \\
    \bottomrule
  \end{tabularx}
  \caption{Grounding mapping summary, extended with DMM-level
  attribution. The table is the appendix counterpart of
  Figure~\ref{fig:grounding-map} and is the load-bearing summary of
  the paper's positive content. The DMM column gives the minimum
  maturity level at which the corresponding property becomes
  load-bearing on the position; below that level, investment in the
  property does not pay back via
  Proposition~\ref{prop:gsec-formal}.}
  \label{tab:mapping}
\end{table}

\paragraph{How to read.}
Each row is the assertion that the listed bottleneck is best addressed
by the listed property, measurable along the listed proxy, with the
listed open question available as a refutation route, the listed
literature as the most direct anchor, and the listed DMM level as the
operational threshold below which investment does not pay back.

\subsection*{Cross-reference: propositions, definitions, and questions}

For convenience, the load-bearing formal artefacts of the paper are:

\begin{itemize}
  \item Definition~\ref{def:det-env} --- Deterministic Agentic Environment.
  \item Definition~\ref{def:sci} --- Supply Certainty Index.
  \item Proposition~\ref{prop:det-bound} --- Determinism--Efficiency Bound.
  \item Lemma~\ref{lem:correlated} --- Correlation reshapes but does not erase exponential degradation.
  \item Remark~\ref{rem:retry} --- Retries do not erase the bound.
  \item Proposition~\ref{prop:goodhart-floor} --- Verifier--Goodharting Floor.
  \item Proposition~\ref{prop:gsec-formal} --- Grounded Self-Evolution Convergence.
  \item Box 1 --- Falsifiable strong form.
  \item Box 2 --- Assumptions of Proposition~\ref{prop:det-bound}.
  \item Box 3 --- Assumptions of Proposition~\ref{prop:goodhart-floor}.
  \item Box 4 --- Five sufficiency conditions for a privileged grounding environment.
  \item Box 5 --- Grounded Self-Evolution Principle (qualitative).
  \item Box 6 --- Five investment-thesis categories.
  \item Box 7 --- Determinism Maturity Model levels $D0$--$D4$.
  \item Box 8 --- The OQ1--OQ5 open-question programme.
\end{itemize}

\subsection*{Notation table}\label{app:notation}

The paper uses several closely related symbols for environment
determinism and verifier quality. We collect them here for reference.

\begin{table}[!h]
  \centering
  \small
  \begin{tabular}{l p{4.5cm} l p{4.0cm}}
    \toprule
    \textbf{Symbol} & \textbf{Meaning} & \textbf{First use} & \textbf{Notes} \\
    \midrule
    $\delta$, $\delta(\mathcal{E})$
      & Per-step success probability of environment $\mathcal{E}$ against ground truth
      & Box 2 (A1)
      & The general environment-level quantity \\[3pt]
    $\delta_i$
      & Per-step success probability of step $i$ in a chain
      & Lemma~\ref{lem:correlated}
      & Allows steps to differ \\[3pt]
    $\delta_i(s)$
      & Per-step success conditional on session state $s$
      & Lemma~\ref{lem:correlated}
      & Random variable in $s$ \\[3pt]
    $\bar\delta_i$
      & $:= \mathbb{E}_s[\delta_i(s)]$, marginal per-step determinism
      & Lemma~\ref{lem:correlated}
      & Marginal expectation \\[3pt]
    $\delta_{\mathrm{env}}$
      & Environment-aggregate determinism in $Q_{\mathrm{match}}$
      & §\ref{sec:agent-econ}
      & Same as $\delta(\mathcal{E})$ \\[3pt]
    $\delta_{\mathrm{inj}}$
      & Injected determinism in the $\tau$-bench pilot
      & §\ref{sec:pilot}
      & Experimentally controlled \\[3pt]
    $\delta_{\min}$, $\delta_{\max}$
      & Convergence thresholds for environment-side evolution
      & Box 5, Prop.~\ref{prop:gsec-formal}
      & Defined by sufficient condition \\[3pt]
    $\varepsilon$
      & $D_{\mathrm{KL}}(V \,\|\, V^\star)$, verifier KL gap
      & Box 3 (B1)
      & Verifier-quality measure \\[3pt]
    $\varepsilon_{\max}$
      & Verifier-quality threshold for monotone evolution
      & Box 5
      & Defined by sufficient condition \\[3pt]
    $C$
      & $\|R\|_\infty$, reward sup-norm bound
      & Box 3 (B3)
      & Replaces v8/v9's $L$-Lipschitz \\[3pt]
    $\tau_{\mathrm{supply}}$
      & Decision-grade-field coverage rate
      & §\ref{sec:agent-econ}
      & Supply-trust scalar \\[3pt]
    $\mathrm{SCI}(\mathcal{E};D)$
      & Supply Certainty Index, applicable subset $P_D$
      & Def.~\ref{def:sci}
      & $[0,1]$ \\[3pt]
    $\mathrm{SCI}^{\delta}$
      & $\delta$-corrected SCI
      & Def.~\ref{def:sci}
      & $:= \delta \cdot \mathrm{SCI}$ \\[3pt]
    $k$
      & Chain length
      & §\ref{sec:intro}, Prop.~\ref{prop:det-bound}
      & Number of steps \\[3pt]
    $k_{\mathrm{eff}}$
      & Effective chain length (post-retry)
      & §\ref{sec:pilot}
      & Fit parameter \\[3pt]
    $r$, $B$
      & Per-step retry budget; total retry budget
      & Remark~\ref{rem:retry}
      & $r = B/k$ under shared budget; $B$ rather than $R$ to avoid clash with reward~$R$ \\
    \bottomrule
  \end{tabular}
  \caption{Notation used in the paper, with first-use location and
  short definition.}
  \label{tab:notation}
\end{table}

\paragraph{What is deliberately absent.}
This appendix does not contain quantitative targets from any specific
organisation. The position is intended to be a generic structural
claim testable in any sufficiently rich commercial supply environment,
not a report on a particular system. Readers who need illustrative
numbers are referred to the public sustainability reports and
disclosures of major sourcing platforms.

\section*{Glossary}\addcontentsline{toc}{section}{Glossary}
\begin{theglossary}\glossaryheader
\glsgroupheading{A}\relax \glsresetentrylist %
\glossentry{abstractionbarrier}{\glossaryentrynumbers{\relax 
		\setentrycounter[]{page}\glsnumberformat{1}}}%
\glossentry{a2a}{\glossaryentrynumbers{\relax 
		\setentrycounter[]{page}\glsnumberformat{5}}}\glsgroupskip
\glsgroupheading{C}\relax \glsresetentrylist %
\glossentry{cpv}{\glossaryentrynumbers{\relax 
		\setentrycounter[]{page}\glsnumberformat{12}}}%
\glossentry{c2m}{\glossaryentrynumbers{\relax 
		\setentrycounter[]{page}\glsnumberformat{5}}}\glsgroupskip
\glsgroupheading{D}\relax \glsresetentrylist %
\glossentry{dataflywheel}{\glossaryentrynumbers{\relax 
		\setentrycounter[]{page}\glsnumberformat{5}}}%
\glossentry{datawall}{\glossaryentrynumbers{\relax 
		\setentrycounter[]{page}\glsnumberformat{1}}}%
\glossentry{deterministicenv}{\glossaryentrynumbers{\relax 
		\setentrycounter[]{page}\glsnumberformat{2}}}\glsgroupskip
\glsgroupheading{E}\relax \glsresetentrylist %
\glossentry{effectivecompute}{\glossaryentrynumbers{\relax 
		\setentrycounter[]{page}\glsnumberformat{1}}}%
\glossentry{embodiedbottleneck}{\glossaryentrynumbers{\relax 
		\setentrycounter[]{page}\glsnumberformat{1}}}\glsgroupskip
\glsgroupheading{G}\relax \glsresetentrylist %
\glossentry{groundedscaling}{\glossaryentrynumbers{\relax 
		\setentrycounter[]{page}\glsnumberformat{5}}}%
\glossentry{grounding}{\glossaryentrynumbers{\relax 
		\setentrycounter[]{page}\glsnumberformat{5}}}\glsgroupskip
\glsgroupheading{M}\relax \glsresetentrylist %
\glossentry{modelcollapse}{\glossaryentrynumbers{\relax 
		\setentrycounter[]{page}\glsnumberformat{5}}}%
\glossentry{multiagenttrust}{\glossaryentrynumbers{\relax 
		\setentrycounter[]{page}\glsnumberformat{1}}}\glsgroupskip
\glsgroupheading{S}\relax \glsresetentrylist %
\glossentry{supplycertainty}{\glossaryentrynumbers{\relax 
		\setentrycounter[]{page}\glsnumberformat{5}}}\glsgroupskip
\glsgroupheading{V}\relax \glsresetentrylist %
\glossentry{verifier}{\glossaryentrynumbers{\relax 
		\setentrycounter[]{page}\glsnumberformat{5}}}%
\end{theglossary}

\end{document}